%% file: prefinal.tex
\documentclass[10pt,twocolumn,letterpaper]{article}

\usepackage[pagenumbers]{wacv}           % arXiv version

\usepackage{graphicx}
\usepackage{amsmath}
\usepackage{amssymb}
\usepackage{amsthm}
\usepackage{enumitem}
\usepackage{booktabs}
\usepackage[table]{xcolor}
\usepackage{url}

\usepackage[pagebackref,breaklinks,colorlinks]{hyperref}
\usepackage[capitalize]{cleveref}
\crefname{section}{Sec.}{Secs.}
\Crefname{section}{Section}{Sections}
\Crefname{table}{Table}{Tables}
\crefname{table}{Tab.}{Tabs.}
\crefname{figure}{Fig.}{Figs.}
\Crefname{figure}{Figure}{Figures}
\crefname{corollary}{Cor.}{Cors.}
\Crefname{corollary}{Corollary}{Corollaries}
\crefname{equation}{Eq.}{Eqs.}

\theoremstyle{plain}
\newtheorem*{theorem}{Theorem}
\newtheorem*{proposition}{Proposition}
\newtheorem{corollary}{Corollary}
\theoremstyle{definition}
\newtheorem*{definition}{Definition}
\newcommand{\thmref}{\hyperref[thm:gauge]{Theorem}}
\newcommand{\propref}{\hyperref[prop:orbit]{Proposition}}

\crefname{section}{Sec.}{Secs.}
\Crefname{section}{Section}{Sections}
\Crefname{table}{Table}{Tables}
\crefname{table}{Tab.}{Tabs.}
\crefname{figure}{Fig.}{Figs.}
\Crefname{figure}{Figure}{Figures}

\theoremstyle{plain}
\newtheorem*{theoremR}{Theorem 1 (restated)}
\newtheorem*{propositionR}{Proposition 1 (restated)}
\newtheorem*{corollaryR}{Corollary 2 (restated)}

\newcommand{\C}{\mathbb{C}}
\newcommand{\Z}{\mathbb{Z}}
\newcommand{\R}{\mathbb{R}}
\newcommand{\E}{\mathbb{E}}
\newcommand{\SN}{S_N}
\DeclareMathOperator*{\argmin}{arg\,min}
\newcommand{\protoname}{\textsc{Met-Sweep}}
\newcommand{\method}{\textsc{JigSync}}

\definecolor{jigviolet}{RGB}{106,45,168}
\newcommand{\ours}[1]{\textcolor{jigviolet}{#1}}
\newcommand{\oursb}[1]{\textcolor{jigviolet}{\textbf{#1}}}
\newcommand{\oursu}[1]{\textcolor{jigviolet}{\underline{#1}}}

\begin{document}

\title{\textbf{\method}: Gauge-Resolved Synchronization for\\
Jigsaw Reassembly under Unknown Piece Orientation}

% \author{Anonymous \confName\ \confYear\ submission\\
% Paper ID \wacvPaperID}
% CAMERA-READY author block:
\author{Soham Pahari \quad Antik Aich Roy \quad Ujjwal Bhattacharya\\
Indian Statistical Institute, Kolkata\\
\url{https://sohamrsch.github.io/jigsync}}

\maketitle

%%%%%%%%% ABSTRACT
\begin{abstract}
   Square jigsaw reassembly requires recovering the spatial arrangement
   of shuffled fragments from their visual content and pairwise
   relationships. While recent studies have made substantial progress,
   existing benchmarks typically assume that all fragments are provided
   upright, reducing reassembly to a permutation problem. We study the
   generalized problem in which each fragment may also have gone through
   an unknown rotation. For this setting we establish
   gauge-unobservability theorem: the minimum of the weighted
   least-squares objective is exactly invariant under a uniform global
   rotation of arbitrary magnitude, so no residual-based criterion can
   recover the global orientation. The theorem further identifies how the
   issue of global orientation can be resolved: an orientation anchor
   estimated from the content of a single fragment, lying outside its
   scope, suffices. To address the above we propose    \method\ attains \ours{$63.8\%$} and \ours{$31.8\%$} absolute accuracy (AA)
   on GAP-3 and GAP-5, the highest reported on both, while additionally
   recovering a rotation per piece that neither benchmark requires. We release \protoname, a degradation protocol that sweeps
   shape, erosion, photometry, grid size and rotation independently. 
\end{abstract}

%%%%%%%%% BODY
\section{Introduction}
\label{sec:intro}

Square jigsaw reassembly is a structured spatial reasoning problem that
requires recovering the original arrangement of shuffled fragments from their
visual content and pairwise relationships. Most existing methods share a
common simplification: fragments are provided upright, and orientation is
either fixed by construction or used only as a training-time
augmentation, so the task actually solved and scored is a permutation
problem over $N$ slots, $|\SN| = N!$.

In real-world reassembly, fragments may arrive in any orientation. A
potsherd, a shredded document strip, or a fresco fragment has no
predetermined orientation until the reconstruction supplies one. Each piece then carries a second decision, its placement and its
orientation, and these compound across fragments to expand the search
space from $N!$ to $4^N N!$, the size of the wreath product $\Z_4 \wr
\SN$. At
$N = 9$ this is $262{,}144\times$ the permutation-only setting; at
$N = 25$ the factor exceeds $10^{15}$.

\begin{figure}[t]
  \centering
  \includegraphics[width=\linewidth]{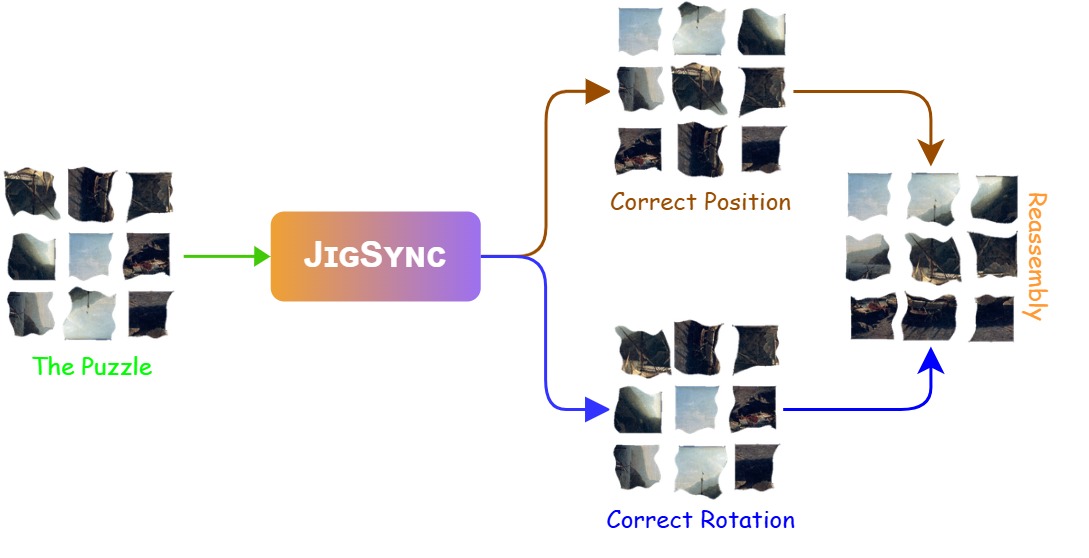}
  \caption{\textbf{\method} is a gauge-resolved reassembly framework for
  degraded, shuffled and arbitrarily rotated pieces. It synchronizes
  pairwise pose relations, anchors the otherwise unobservable global
  orientation on content cues, and recovers each piece's position and
  rotation.}
  \label{fig:teaser}
\end{figure}

\paragraph{Square jigsaw solving.} Jigsaw reassembly was formulated
computationally in the 1960s~\cite{freeman1964} and later shown
NP-complete~\cite{demaine2007}. Classical approaches paired a hand-crafted
pairwise compatibility with global optimizer, using strategies such as greedy
placement~\cite{pomeranz2011}, genetic search~\cite{sholomon2013}, linear
programming~\cite{yu2015}, and relaxation
labeling~\cite{khoroshiltseva2021,vardi2023}. Early learning-based approaches replaced hand-crafted compatibility measures with learned ones~\cite{sholomon2016,paumard2020,jigsawgan}. Later work explored
generative and diffusion
formulations~\cite{jpdvt,giuliari2024,diffassemble,talon2022,talon2025}
and reinforcement learning~\cite{sd2rl,pdnga,erlmpp,ceari}. Transformer
solvers built on ViT~\cite{vit} predict positions directly~\cite{heck2025} or regress fragment
coordinates~\cite{fcvit}, while multimodal solvers add language
supervision~\cite{vlhsa} or recast reassembly over discrete
tokens~\cite{puzlm}. Vision-language models have been evaluated as
solvers under constrained settings~\cite{lyu2025,jigsawr1}, and jigsaw
solving is separately used as a self-supervised pretext
task~\cite{noroozi2016,wei2019,jigsawvit,ren2023}. Unknown piece
orientation was also studied in the classical setting by
Gallagher~\cite{gallagher2012}, who considered square jigsaws with
unknown piece orientations, including a tree-based procedure for
component merging and a pairwise Markov Random Field (MRF) formulation for
known locations and unknown orientations. Gallagher~\cite{gallagher2012}
also introduced Mahalanobis Gradient Compatibility (MGC), which scores
candidate matches from the smoothness of gradient distributions across
shared boundaries. Our setting differs in that the fragments are
eroded and irregularly shaped, so the boundary assumptions underlying
such compatibility measures no longer hold. \Cref{sec:cues} quantifies
a compatibility term of this family and finds it falls below chance as
fragment shape becomes irregular.

% \paragraph{Unknown piece orientation.} Orientation is not a new question.
% Gallagher~\cite{gallagher2012} introduced square jigsaw puzzles in which
% each piece's orientation is unknown, solved them by a tree-based
% procedure that greedily merges components under the geometric
% constraints, formulated the case of known location and unknown
% orientation as a pairwise Markov random field (MRF) over per-piece
% orientation, and proposed Mahalanobis Gradient Compatibility (MGC), which
% scores a candidate match by the smoothness of the gradient distribution
% across the shared boundary. The gap we identify is therefore in the
% \emph{benchmarks} rather than in the literature: those on which current
% solvers are compared, including the one we report on, provide fragments
% upright, leaving the orientation degree of freedom neither exercised nor
% scored. The fragments also differ. MGC and related measures read a band
% of pixels either side of a shared boundary, presuming the two boundaries
% abut and remain intact, whereas the fragments we treat are eroded and
% irregularly shaped, so adjacent boundaries do neither.
% \Cref{sec:cues} quantifies a compatibility term of this family and finds
% it falls below chance as fragment shape becomes irregular.

\paragraph{Eroded and irregular fragments.} A parallel line drops the
square-fragment assumption: adversarial inpainting across eroded
gaps~\cite{bridger2020}, twin embeddings~\cite{ten}, crossing-cut
polygonal puzzles~\cite{harel2024}, archaeological
reconstruction~\cite{derech2021}, pairwise alignment for arbitrary
fragments~\cite{shahar2025}, and the scanned RePAIR
benchmark~\cite{repair}. Closest to us is GAP~\cite{puzzleflow}, which
generates irregular eroded fragments from a distribution learned on
RePAIR and pairs them with a flow-matching solver. GAP is our primary
comparator and likewise provides all fragments upright.

\paragraph{Group synchronization.} The natural machinery for the
generalized problem is
synchronization~\cite{singer2011,bandeira2017,cucuringu2012}: estimate
noisy \emph{relative} poses on pairs, then solve for globally consistent
absolute poses. The decomposition is attractive because relative pose is
locally measurable while absolute pose is not. That relative measurements
determine absolute ones only up to a global constant is standard; what
appears not to be stated, and is silently relied on in practice, is the
consequence for the downstream translation estimate, which we give in
\cref{sec:gauge}. Robust variants of that estimate, notably translation
synchronization by truncated least squares~\cite{huang2017}, are
discussed in \cref{sec:consequences}.

% \paragraph{Our finding.} A synchronization pipeline built on relative
% measurements determines every fragment's pose only up to one free global
% rotation, and \emph{cannot recover that rotation by any residual
% criterion}. Selecting among the four candidate gauges by comparing residuals is not a
% noisy decision but no decision at all: all four agree to 10--12 decimal
% places on real and synthetic data alike, and a single inverted tie moves
% a downstream evaluation between $100\%$ and $11\%$ on individual
% puzzles. The theorem also identifies what
% resolves the issue, since it constrains anything derived from the
% pairwise graph and nothing else: an orientation estimated from the
% content of a single fragment lies outside its scope. Such an anchor
% raises the GAP-3 selection ceiling from \ours{$16.61\%$} to
% \ours{$36.72\%$} and the GAP-5 ceiling from \ours{$5.38\%$} to
% \ours{$10.66\%$}, the largest single effect in our experimental program;
% six other inference-level interventions each move it by less than $1.2$
% points.

\paragraph{Our finding.} A synchronization pipeline based on relative
measurements determines every fragment's pose only up to one free global
rotation and cannot recover that rotation by any residual criterion.
All four candidate gauges agree to 10--12 decimal places, while a single
inverted tie can change downstream evaluation substantially. The theorem
also identifies the resolution: an orientation estimated from the content
of a single fragment lies outside the pairwise graph's scope and can fix
the gauge. This anchor raises the GAP-3 selection ceiling from
\ours{$16.61\%$} to \ours{$36.72\%$} and the GAP-5 ceiling from
\ours{$5.38\%$} to \ours{$10.66\%$}, the largest effect in our
experimental program.

\paragraph{Contributions.}
\begin{enumerate}[leftmargin=1.3em,itemsep=0pt,topsep=2pt]
  \item A gauge-unobservability theorem for least-squares translation
    estimation, showing that the fitting penalty cannot between distinguish
    global orientations, together with the resulting content-anchored resolution
    the theorem implies (\cref{sec:gauge}).
  \item An orbit-centroid degeneracy result explaining constant-predictor
    collapse in per-fragment pose regression, with a min-over-gauge
    objective that resolves it (\cref{sec:orbit}).
  \item \method, a solver for $\Z_4 \wr \SN$ combining a dense tiered
    relative-pose vocabulary, bearing-only anisotropic edge weighting,
    and gauge-anchored $\rho$-conditioned decoding (\cref{sec:method}).
  \item \protoname, a degradation protocol that recovers the classical
    square-piece setting exactly at zero degradation and interpolates to
    destroyed-geometry, arbitrarily-oriented photographs
    (\cref{sec:protocol}).
\end{enumerate}

%------------------------------------------------------------------------
\section{Problem Setting}
\label{sec:problem}

\subsection{Placement and orientation together}
\label{sec:posedef}

A puzzle is a set of $N$ fragments $X = \{x_1,\dots,x_N\}$ cut from an
unknown source image and provided to the solver in arbitrary order, each
having additionally been turned by a multiple of $90^\circ$. Solving it
means producing a \textbf{placement} $\pi$, assigning each fragment to
one of the $N$ grid slots with no two sharing a slot, and an
\textbf{orientation} $\rho_i \in \{0^\circ,90^\circ,180^\circ,270^\circ\}$
per fragment, chosen independently. We write the pair as $g^{*} = (\rho,
\pi)$. Placements form the symmetric group $\SN$, the four orientations
form the cyclic group $\Z_4$, and pairs of the two compose in the manner
of a \emph{wreath product} $\Z_4 \wr \SN$, the formal name for ``$N$
independently rotatable objects distributed among $N$ labelled slots'',
of size
\begin{equation}
  |\Z_4 \wr \SN| = 4^{N} \cdot N!
  \label{eq:wreath}
\end{equation}
against $N!$ for placement alone. Grid size is not supplied at test time.
\Cref{fig:gallery} shows one instance as the solver receives it.

\begin{figure}[t]
  \centering
  \includegraphics[width=\linewidth]{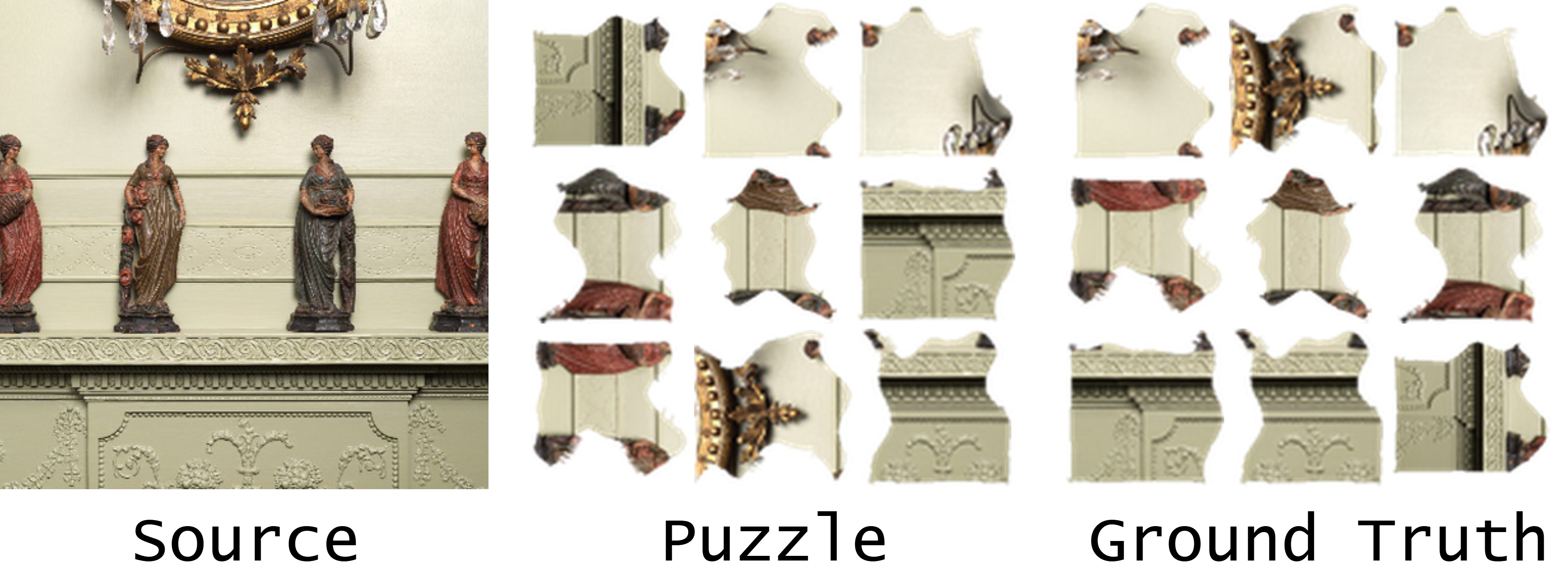}
  \caption{A \protoname\ instance. The source photograph (left) is cut into
  irregular eroded fragments, provided shuffled \emph{and} independently
  rotated (centre); ground-truth slots are at right, rotation still
  unresolved. GAP and its predecessors exercise the shuffle but not the
  rotation.}
  \label{fig:gallery}
\end{figure}

\subsection{Relative measurement and the global gauge}
\label{sec:syncintro}

Enumerating $4^N N!$ candidates is infeasible, and predicting a
fragment's slot directly is little better, since a fragment carries
little information about its absolute position. What a \emph{pair} of
fragments does carry information about is their relationship: whether
they were neighbours, on which side, and how much one is turned relative
to the other. For each ordered pair $(i,j)$ we therefore estimate a
\textbf{relative rotation} $\hat\rho_{ij} \in \Z_4$ and a \textbf{relative
displacement} $\hat\delta_{ij} \in \R^2$. Turning a collection of noisy
relative estimates into one globally consistent set of absolute values is
the problem known as \emph{synchronization}, solved rotations first,
because displacements can only be compared once all fragments share a
common frame.

That ordering is what creates the difficulty. Rotation synchronization
returns each fragment's absolute rotation only up to one shared unknown:
if $\hat\theta_1,\dots,\hat\theta_N$ is a solution, so is
$\hat\theta_i + c$ for any fixed $c \in \Z_4$, since every relative
rotation $\hat\theta_j - \hat\theta_i$ is unchanged. The constant $c$ is
what we call the \textbf{global gauge}, after the physics usage in which
a gauge is a degree of freedom that changes the description of a system
without changing anything measurable about it. It has four values, one of
which is right, and it cannot be left unresolved: the relative
displacements must be expressed in the recovered frame before they are
combined, so a wrong $c$ rotates every displacement in the puzzle and the
reconstruction comes out turned with it. Some choice must be made.
\Cref{sec:gauge} shows that the standard way of making it carries no
information whatsoever.

%------------------------------------------------------------------------
\section{Gauge Unobservability}
\label{sec:gauge}

\subsection{The position fit}

Once rotations are fixed, absolute positions $\{x_i \in \R^2\}$ are
recovered as the layout disagreeing least with the measured
displacements, each pair penalizing the mismatch between where the layout
puts $j$ relative to $i$ and where the measurement said it should be:
\begin{equation}
  x^* = \argmin_{x} \sum_{(i,j)\in E}
    (x_j - x_i - \hat\delta_{ij})^\top W_{ij}
    (x_j - x_i - \hat\delta_{ij}).
  \label{eq:transsync}
\end{equation}
Here $E$ is the set of measured pairs and $W_{ij}$ a $2 \times 2$
positive-semidefinite matrix expressing how far a pair is trusted and in
which direction; an anisotropic $W_{ij}$ lets it be confident about
direction while uncertain about distance, which \cref{sec:bearing}
exploits. With every $W_{ij}$ a multiple of the identity this is ordinary
graph-Laplacian least squares, one sparse solve after fixing a single
fragment's position to remove the residual freedom to translate the
layout. Since rotation synchronization left $c$ undetermined, the natural
procedure is to run \cref{eq:transsync} once per candidate $c$, with
every $\hat\delta_{ij}$ rotated accordingly, and keep the smallest
penalty.

\subsection{The theorem}

\begin{theorem}
\label{thm:gauge}
Let $G=(V,E)$ carry relative-position measurements $\hat\delta_{ij} \in
\R^2$ with positive-semidefinite weights $W_{ij}$, and let $J$ be the
objective of \cref{eq:transsync}. Let $R \in O(2)$ be any orthogonal
transformation, in particular a rotation by an arbitrary angle, and
define the uniformly rotated measurement set $\hat\delta'_{ij} =
R\hat\delta_{ij}$, $W'_{ij} = RW_{ij}R^\top$, with objective $J'$. Then
\begin{equation}
  \min_x J'(x) = \min_x J(x),
  \label{eq:gaugeinvariance}
\end{equation}
and the minimizer of $J'$ is $Rx^*$, where $x^*$ minimizes $J$.
Moreover, the equality holds edgewise: each pair's penalty is
individually unchanged.
\end{theorem}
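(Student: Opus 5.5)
The plan is to prove the edgewise identity first and obtain the global statements as consequences. The key construction is the change of variables $x \mapsto Rx$: given a layout $x=\{x_i\}$, define $x'_i = Rx_i$ for every vertex.

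\textbf{Edgewise identity.} For a single edge $(i,j)$ the residual transforms linearly:
\[
x'_j - x'_i - \hat\delta'_{ij} = R(x_j - x_i - \hat\delta_{ij}).
\]
Substituting $W'_{ij} = RW_{ij}R^\top$ gives the rotated penalty
\[
(x_j-x_i-\hat\delta_{ij})^\top R^\top R\, W_{ij}\, R^\top R\,(x_j-x_i-\hat\delta_{ij}).
\]
Because $R\in O(2)$ satisfies $R^\top R = I$, this equals the original penalty of that edge. Two remarks:
\begin{itemize}
\item The identity holds for every layout $x$, not only at the optimum, and needs no properties of $W_{ij}$ beyond symmetry.
\item Positive semidefiniteness of $W_{ij}$ is preserved under conjugation by $R$, so $J'$ is a legitimate objective of the same form.
\end{itemize}

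\textbf{Global consequences.} Summing over $E$ gives $J'(Rx) = J(x)$ for all $x$. The map $x\mapsto Rx$ (applied vertexwise) is a bijection of $(\R^2)^N$, with inverse $x\mapsto R^\top x$. Hence the two objectives take exactly the same set of values, so $\inf J' = \inf J$. Existence of the minimum transfers as well: $J$ is a convex quadratic bounded below by $0$, so it attains its minimum. Moreover, $x^*$ minimizes $J$ if and only if $Rx^*$ minimizes $J'$. At the minimizer the edgewise statement specializes to: each edge's penalty at $Rx^*$ under $J'$ equals its penalty at $x^*$ under $J$.

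\textbf{Main obstacle: uniqueness.} The only delicate point is what "the minimizer" means. The minimizer of $J$ is never unique, because $J$ is invariant under global translation. It can be non-unique further if the graph is disconnected or the PSD weights are rank-deficient along some directions. I would handle this in one of two ways.
\begin{itemize}
\item State the result as a correspondence of minimizer sets: $\argmin J' = R\,\argmin J$.
\item Alternatively, adopt the paper's normalization of fixing one fragment's position. If the anchor is placed at the origin, it is preserved by $R$, and the normalized minimizer of $J'$ is exactly $Rx^*$ whenever $x^*$ is unique.
\end{itemize}
Either way, nothing beyond $R^\top R = I$ is used. In particular, the result holds for rotations by an arbitrary angle and for reflections, and covers the four $\Z_4$ gauges as a special case.
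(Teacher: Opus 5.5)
Your proof is correct and is essentially the paper's argument: the paper uses the same vertexwise substitution $x = Rx'$, shows each edge residual becomes $Rv$ so that $R^\top R = I$ applied twice leaves $v^\top W_{ij} v$ unchanged, and sums to obtain $J'(Rx') = J(x')$. Your extra care about existence of the minimum and about non-uniqueness of minimizers (stating the conclusion as $\argmin J' = R\,\argmin J$, or fixing the anchor at the origin) is a sound sharpening of the theorem's phrase ``the minimizer of $J'$ is $Rx^*$'', which the paper's proof leaves implicit.
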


\begin{proof}
For any $x$, write $x = Rx'$ with $x' = R^\top x$. Per edge,
$x_j - x_i - \hat\delta'_{ij} = R(x'_j - x'_i - \hat\delta_{ij}) =: Rv$,
so that edge's penalty is
$(Rv)^\top (RW_{ij}R^\top)(Rv) = v^\top R^\top R\, W_{ij}\, R^\top R\, v
= v^\top W_{ij} v$,
by orthogonality applied twice, which establishes the edgewise claim.
Summing over edges gives $J'(Rx') = J(x')$ for every $x'$, and hence
$\min_x J'(x) = \min_{x'} J(x')$.
\end{proof}

\subsection{Consequences}
\label{sec:consequences}

\begin{corollary}
\label{cor:tiebreak}
Any procedure that selects among the four candidate global gauges by
comparing achieved penalties ties \emph{exactly} across all four.
Whichever candidate such an implementation reports is determined by
floating-point rounding, not by evidence.
\end{corollary}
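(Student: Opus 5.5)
The plan is to apply the \thmref{} to each of the four candidate gauges in turn, taking $R$ to be the rotation by $c\cdot 90^\circ$. The procedure described after \cref{eq:transsync} fixes a gauge $c\in\Z_4$, expresses every measured displacement in the corresponding frame, and solves the least-squares problem. So the first step is to write the candidate-$c$ problem in exactly the form the theorem covers. Let $R_c$ be the rotation by $c\cdot 90^\circ$. Relative to the $c=0$ problem, with data $(\hat\delta_{ij},W_{ij})$, the candidate-$c$ problem uses $\hat\delta^{(c)}_{ij}=R_c\hat\delta_{ij}$. Its weights are $W^{(c)}_{ij}=R_cW_{ij}R_c^\top$, because the trust matrices are expressed in the same rotated frame as the displacements.

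In the isotropic case $W_{ij}=w_{ij}I$ this conjugation is automatic, since $R_c(w_{ij}I)R_c^\top=w_{ij}I$. For the anisotropic bearing-based weights I will argue that they are built from the bearing of $\hat\delta_{ij}$. Rotating $\hat\delta_{ij}$ therefore rotates the principal axes of $W_{ij}$, which is exactly conjugation by $R_c$. This is the one step needing care: if some implementation did not transport the weights with the displacements, the hypothesis of the theorem would fail. I will therefore state the corollary for the consistent transport that the pipeline performs.

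With this identification, \cref{eq:gaugeinvariance} gives $\min_x J^{(c)}(x)=\min_x J^{(0)}(x)$ for every $c\in\{0,1,2,3\}$. All four achieved penalties are therefore equal as real numbers, which is the exact tie. The edgewise part of the theorem strengthens this: any per-edge statistic of the residuals, such as a robust or truncated aggregate, also ties, so the conclusion is not limited to the plain sum. The translation anchor (fixing one fragment's position) does not break the tie, because the minimizer $R_cx^*$ is simply a rotated copy. After re-anchoring by a translation, the objective value is unchanged, since $J$ depends on $x$ only through the differences $x_j-x_i$.

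The last claim concerns which candidate an implementation reports. The argument is that the four computed minima are exact equals perturbed only by roundoff in the sparse solve and the summation. The argmin, or the first index when the values compare equal, therefore depends only on rounding and iteration order. None of the data enters the comparison except through quantities that are provably identical. This part is an interpretive consequence of exact equality rather than a further derivation, so I would state it directly once the tie has been established.
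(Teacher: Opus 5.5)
Your proposal is correct and is the paper's own argument: the corollary is \thmref\ instantiated with $R$ the rotation by $c\cdot 90^\circ$, so the four attained minima are equal as real numbers, and any comparison among them can only be decided by roundoff. Your explicit check that the weights are transported by conjugation (automatic for $w_{ij}I$, and true by construction for the bearing weights of \cref{eq:bearing}, since $\hat d$ rotates with $\hat\delta_{ij}$) makes precise a hypothesis the paper leaves implicit when it says only that each $\hat\delta_{ij}$ is rotated; your robust-aggregate remark should likewise be limited to losses that read each residual only through $v^\top W_{ij} v$, as the paper states.
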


The four quantities compared are equal as real numbers rather than
merely close, verified to ten or twelve decimal places on real and
synthetic inputs alike. The proof uses only $R^\top R = I$, so it is indifferent to how
many pairs were measured, how they were weighted, and how noisy they
are.

\noindent\textbf{Robust estimators inherit the invariance.} One might
expect a robust variant of \cref{eq:transsync} to behave differently.
Translation synchronization by truncated least squares~\cite{huang2017}
discards or downweights an edge by the magnitude of its residual, and the
theorem holds edgewise, so the same edges are truncated at the same
values under every global rotation and the attained minimum is again
identical. The argument extends to any $M$-estimator whose loss reads the
residual only through $v^\top W_{ij} v$. Robustification addresses
outlying measurements; it does not make the gauge observable.

\noindent\textbf{A remark from gauge theory.} The result follows the
principle that physical observables must be gauge-invariant. The theorem says precisely that the fitting penalty
\emph{is} such a quantity, and a gauge-invariant quantity cannot
distinguish between gauges. Global orientation is therefore not an
observable of the pairwise measurement system at all but a gauge freedom,
which can only be pinned down against a reference external to that
system. The single-fragment anchor of \cref{sec:orbit} is that reference,
and gauge fixing is the right name for what it does. This also accounts for two earlier observations: scoring all four
candidates rather than the first changed nothing downstream, and a family
of edge-weighting and consistency-filtering refinements measured as null
end to end. Both were read through a decision layer carrying no
information.

\subsection{Why the obvious estimator collapses}
\label{sec:orbit}

The anchor we need predicts a fragment's absolute orientation, and
ideally its grid cell, from that fragment alone. We call it the
\textbf{unary head}, since it consumes one fragment where the pairwise
model consumes two. Trained in the obvious way it fails, for reasons independent of
optimization. Consider its target when each puzzle is presented in a random
gauge: rotating the grid by $90^\circ$ sends the top-left cell to the
top-right, and again to the bottom-right, so any cell has four possible
targets. This set of four is the cell's \textbf{orbit} under the rotation
group.

\begin{definition}
For a fragment at ground-truth cell $(r,c)$ in an $n \times n$ grid and
gauge $k \in \Z_4$, let $\mathrm{rot}_k(r,c)$ be the cell occupied by
$(r,c)$ after the grid is rotated by $k \cdot 90^\circ$ about its centre,
in coordinates normalized to $[0,1]^2$.
\end{definition}

\begin{proposition}
\label{prop:orbit}
For every grid cell $(r,c)$,
$\ \tfrac14\sum_{k=0}^{3}\mathrm{rot}_k(r,c) = (\tfrac12,\tfrac12)$.
\end{proposition}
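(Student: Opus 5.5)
The plan is to fix coordinates explicitly, then observe that the four rotations form a group whose only fixed point is the grid centre; averaging over the orbit therefore produces that fixed point.

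\textbf{Coordinates.} I normalise cell $(r,c)$, with $r,c\in\{0,\dots,n-1\}$, to its centre
\[
p=\Bigl(\tfrac{r+1/2}{n},\ \tfrac{c+1/2}{n}\Bigr)\in[0,1]^2 .
\]
Rotation of the grid by $90^\circ$ about its centre acts on these points as the affine map $T(p)=(\tfrac12,\tfrac12)+R_{90}\bigl(p-(\tfrac12,\tfrac12)\bigr)$. Combinatorially this is $(r,c)\mapsto(c,\,n-1-r)$, and I would check that this index map agrees with $T$ on cell centres. Hence $\mathrm{rot}_k(r,c)=T^k(p)$.

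\textbf{Main step.} Put $u=p-(\tfrac12,\tfrac12)$. Then
\[
\tfrac14\sum_{k=0}^{3}\mathrm{rot}_k(r,c)=(\tfrac12,\tfrac12)+\tfrac14\Bigl(\sum_{k=0}^{3}R_{90}^k\Bigr)u .
\]
It remains to show $\sum_{k=0}^{3}R_{90}^k=0$. The quickest route is that $R_{90}^2=-I$, so the terms cancel in pairs: $I+R_{180}=0$ and $R_{90}+R_{270}=0$. Equivalently, in complex notation $\sum_k i^k=0$. This is the whole argument.

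\textbf{Possible obstacle.} The only real ambiguity is the normalisation convention in the Definition. One might normalise by cell corners, e.g.\ $(r/(n-1),\,c/(n-1))$, instead of centres. I would handle this by noting that the argument only needs the normalisation map to be affine and to carry the grid's rotation centre to $(\tfrac12,\tfrac12)$. Both conventions satisfy this, since $(r,c)\mapsto(c,n-1-r)$ becomes $x\mapsto(x_2,\,1-x_1)$ in either. The identity then holds for every cell, including the centre cell when $n$ is odd, where $u=0$ and the orbit is a single repeated point.
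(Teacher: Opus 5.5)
Your proposal is correct and matches the paper's proof: both centre coordinates at $(\tfrac12,\tfrac12)$ and reduce the claim to $\sum_{k=0}^{3}R^k=0$. The paper gets that identity by diagonalizing $R$ over $\C$ and evaluating $1+\lambda+\lambda^2+\lambda^3=0$ at $\lambda=\pm i$; your pairwise cancellation from $R^2=-I$ is the same fact stated more directly, and your check that $(r,c)\mapsto(c,n-1-r)$ acts as the affine rotation about $(\tfrac12,\tfrac12)$ on normalized cell coordinates makes explicit a step the paper leaves implicit.
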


\begin{proof}[Proof sketch]
Let $R$ be the $90^\circ$ rotation about the grid centre acting on
centred coordinates $v = (r,c) - (\tfrac12,\tfrac12)$. In the eigenbasis
of $R$, $\sum_{k=0}^3 R^k$ acts as $1 + \lambda + \lambda^2 + \lambda^3 =
(1-\lambda^4)/(1-\lambda) = 0$ for every fourth root of unity $\lambda
\neq 1$, and the sole $\lambda = 1$ direction is the grid centre itself,
against which $v$ was already measured. Hence $\sum_k R^k v = 0$. Full
proof in the supplement.
\end{proof}

Every cell's four targets average to the same point, the grid centre.
Since squared-error loss is minimized by predicting the mean of the
target, the consequence is immediate.

\begin{corollary}
\label{cor:collapse}
If the gauge $k$ is drawn uniformly per training example and cannot be
inferred from a single fragment, then under squared-error or Gaussian
likelihood loss the best possible prediction, \emph{for every fragment
regardless of what it depicts}, is the grid centre.
\end{corollary}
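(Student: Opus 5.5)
We need to prove Corollary (collapse). We take the fragment's observable input $x$ and its target $y=\mathrm{rot}_k(r,c)$ with $k$ uniform on $\Z_4$. By the hypothesis that the gauge cannot be inferred from a single fragment, we take $k$ to be independent of $x$ given the true cell $(r,c)$. Equivalently, the conditional law of $k$ given $x$ is uniform, because the presented fragment's content carries no gauge information.

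The plan is the standard mean-minimizer argument followed by \propref. First, for squared error, the risk $\E\|f(x)-y\|^2$ is minimized pointwise by $f^*(x)=\E[y\mid x]$, via the usual bias--variance decomposition. For a Gaussian likelihood with mean $\mu(x)$ and covariance $\Sigma(x)$, the negative log-likelihood for fixed $\Sigma$ is a $\Sigma^{-1}$-weighted squared error. That weighted error is again minimized at $\mu=\E[y\mid x]$, since the cross term vanishes for any positive-definite weight. The optimal $\Sigma$ is then the conditional covariance, so the mean prediction is still the conditional mean.

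Second, we compute the conditional mean by the tower rule over the unknown cell:
\[
\E[y\mid x]=\E\bigl[\E[\mathrm{rot}_k(r,c)\mid x,(r,c)]\bigm| x\bigr]=\E\Bigl[\tfrac14\textstyle\sum_{k=0}^3\mathrm{rot}_k(r,c)\Bigm| x\Bigr].
\]
The inner equality uses that $k$ is uniform and independent given $x$ and $(r,c)$. By \propref, the inner average equals $(\tfrac12,\tfrac12)$ for every cell, so the outer expectation is $(\tfrac12,\tfrac12)$ whatever the posterior over cells. This gives the grid centre for every $x$, independent of what the fragment depicts.

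The main obstacle is making the hypothesis ``cannot be inferred from a single fragment'' precise. The fragment is itself rotated, so the gauge could leak through the fragment's appearance. I would formalize the hypothesis as conditional uniformity of $k$ given $(x,(r,c))$, which is exactly what is needed. I would remark that \propref\ then makes the conclusion hold even if $x$ fully reveals the cell $(r,c)$, because the averaging is over the gauge alone.
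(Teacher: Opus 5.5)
Your proof is correct and follows the paper's route: the conditional mean minimizes squared error (and, for any fixed covariance, the Gaussian NLL in its mean parameter), and \propref\ makes the gauge average $\tfrac14\sum_{k}\mathrm{rot}_k(r,c)$ equal $(\tfrac12,\tfrac12)$ for every cell. Your tower-rule step over an uncertain cell and your joint mean/covariance treatment are mild refinements of the paper's per-cell, fixed-variance argument. One small slip: drop ``Equivalently''. Uniformity of $k$ given $x$ alone is strictly weaker than the uniformity given $(x,(r,c))$ that your tower step actually uses. For example, a fragment that reveals its rotated-frame cell $\mathrm{rot}_k(r,c)$ but not $(r,c)$ satisfies the former yet defeats the conclusion.
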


We observed exactly this: cell accuracy at chance for every epoch, and
the coordinate loss settling at $0.3335$ against the value $1/3$ that
\cref{cor:collapse} predicts in closed form.

\noindent\textbf{The min-over-gauge objective.} The collapse follows from
the loss averaging over gauges, so we take the best of the four instead,
letting gradient flow only through the winner,
\begin{equation}
  \mathcal{L} = \min_{k \in \Z_4}\Bigl[
    \lambda_{c}\,\mathrm{NLL}\bigl(\hat{xy},\, \mathrm{rot}_k(r,c)\bigr)
    + \mathrm{CE}\bigl(\hat\rho,\, (k - r_i) \bmod 4\bigr)\Bigr],
  \label{eq:minovergauge}
\end{equation}
where $\hat{xy}$ and $\hat\rho$ are the head's cell and rotation
predictions, $r_i$ the rotation applied to fragment $i$, and $\lambda_c$
balances the terms. A constant predictor cannot sit close to all four orbit points at once,
so the objective rewards content-dependent predictions self-consistent
under \emph{some} gauge, deferring the choice to inference. On upright
GAP the head reaches \ours{$93.6\%$} rotation and \ours{$33.9\%$} cell
accuracy, three times chance, with \ours{$94.1\%$} and \ours{$11.5\%$} on
GAP-5. That rotation figure, far above the $44$--$52\%$ the pairwise model
reaches on \emph{relative} rotation, is what makes the head usable as an
anchor.

%------------------------------------------------------------------------
\section{The \protoname\ Protocol}
\label{sec:protocol}

Existing benchmarks vary damage as one aggregate quantity, so they cannot
isolate which form of damage defeats which cue. We therefore constructed
a generator with an independently controlled axis per form of
degradation, in which setting every axis to zero reproduces the classical
square-piece puzzle exactly: shape irregularity $A$, boundary erosion
$\varepsilon_{geo}$, photometric corruption $\varepsilon_{photo}$, and
grid size $n$. Full formulas and unit tests are in the supplement.

\noindent\textbf{Conjugate fracture tessellation.} An $n \times n$ grid
has $2n(n-1)$ internal edge segments, each shared by two cells. If
neighbouring cells were cut along even slightly different curves their
fragments could never mate, so both must use the \emph{same} curve. For
each internal edge from $a$ to $b$ we displace the straight chord
sideways by a sum of $K$ sine harmonics,
\begin{equation}
  a_k \sim \mathcal{N}\!\left(0, \tfrac{A}{k}\right), \quad
  d(t) = \sum_{k=1}^{K} a_k \sin(k\pi t), \quad t \in [0,1],
  \label{eq:fracture}
\end{equation}
placing the curve at $a + t(b-a) + d(t)\hat n$ for chord normal $\hat n$.
Since $\sin(0) = \sin(k\pi) = 0$, every curve is pinned at both endpoints
for any $A$, and $A = 0$ gives straight edges. Each cell walks its four
surrounding curves in a consistent order, so a shared edge is the same
sampled point sequence for both neighbours. We sweep $A \in
\{0,8,16,24,32\}$.

\noindent\textbf{Erosion and photometric damage.} Erosion is applied per
fragment \emph{after} cutting, so the two sides of a former seam no
longer match: a boundary point at normalized arc-length $s$ is pushed
inward along its local normal by $e(s) = \varepsilon_{geo} P (0.5 +
\eta(s))$, with $P$ the fragment resolution and $\eta$ periodic Perlin
noise~\cite{perlin1985} at three octaves, plus Poisson-distributed disk
bites at $\varepsilon_{geo} \ge 0.10$. Spatial variation is deliberate: uniform morphological erosion would
leave the result directly predictable from the original boundary. Photometric
damage adds global gain, sepia desaturation, noise and a JPEG
round-trip, plus a boundary-weighted fade $w(x) =
\exp(-d_\partial(x)/0.15P)$ attenuating the pixels nearest the fragment
edge, exactly those a boundary-matching cost reads.

\noindent\textbf{Rotation and presentation order.} Each fragment is turned by an independent $r_i \sim
\mathrm{Uniform}\{0,90,180,270\}$, applied by array rotation rather than
resampling: an arbitrary angle leaves interpolation blur along fragment
edges from which a solver could recover orientation without reading
content at all (\cref{sec:limitations} returns to this). A uniform
permutation then fixes the presentation order. \emph{This is the axis GAP and its predecessors leave at zero}, and the
reason the gauge problem arises here and not there. Fragments are cut from $12{,}000$ CC0 Metropolitan Museum of Art
images~\cite{metopenaccess,ypsilantis2021}, drawn from the same
catalogues GAP is built from and verified free of cross-split overlap,
yielding more than $100$ thousand puzzles at $n \in \{3,4,5,6,8,10\}$.

%------------------------------------------------------------------------
\section{\method}
\label{sec:method}

\method\ combines the preceding two sections (\cref{fig:jigsync}): a
pairwise model measures relative poses, rotation synchronization
reconciles them up to the gauge, the unary head supplies the gauge, and
position fitting produces the layout.

\begin{figure*}[t]
  \centering
  \includegraphics[width=\textwidth]{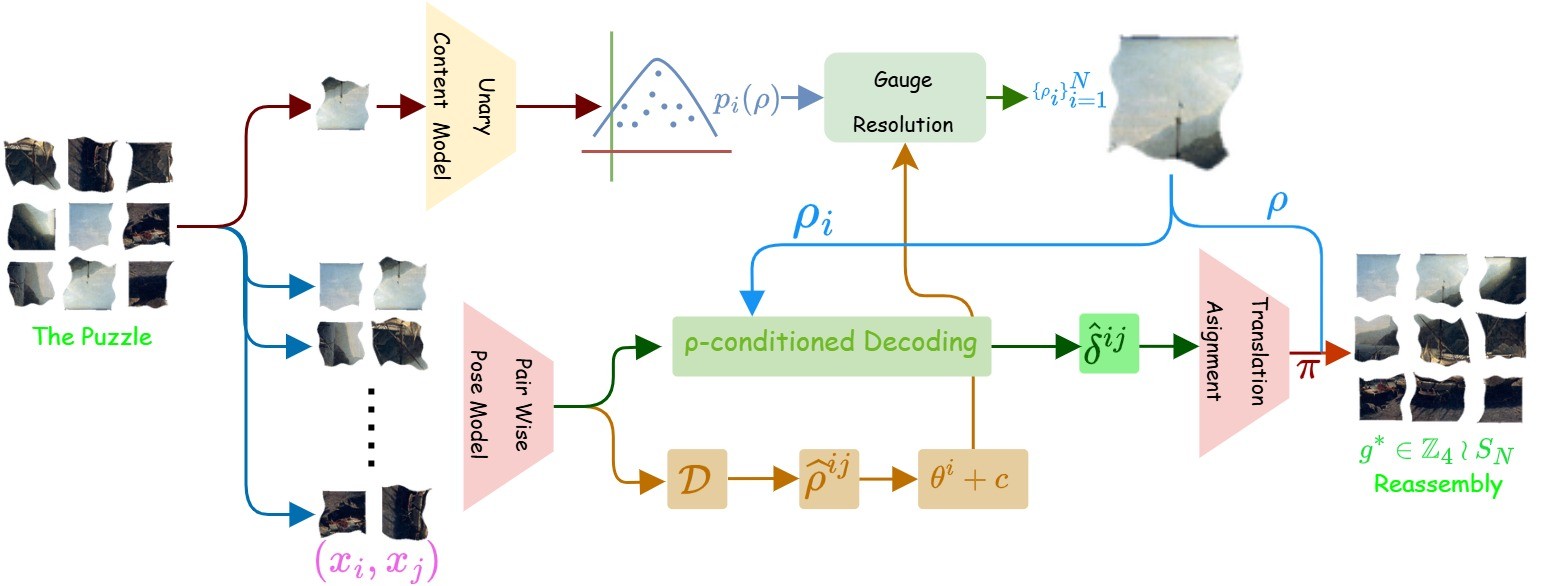}
  \caption{\textbf{\method} recovers the global pose of shuffled and
  rotated puzzle pieces. The Pairwise Pose Model estimates relative
  rotations and positions from degraded pieces $\{x_i\}_{i=1}^{N}$.
  Synchronizing the relative rotations $\hat{\rho}_{ij}$ leaves the free
  offset $c$, which no fitting penalty can determine (\thmref); it is
  resolved instead by the unary orientation probabilities $p_i(\rho)$,
  computed one fragment at a time and hence independent of the pairwise
  measurements. The resulting absolute rotations $\{\rho_i\}_{i=1}^{N}$
  condition pairwise decoding to give relative positions
  $\hat{\delta}_{ij}$, followed by translation synchronization and
  assignment into the final pose $g^*=(\rho,\pi)$.}
  \label{fig:jigsync}
\end{figure*}

\subsection{A dense, tiered pose vocabulary}
\label{sec:vocab}

The pairwise model predicts, per ordered pair, one class encoding both
relative displacement and relative rotation. One class per possible
$(\Delta\mathrm{row}, \Delta\mathrm{col})$ is poor at long range: exact
distance between distant fragments is not recoverable from appearance and
demanding it injects label noise, whereas direction often is recoverable
and relative rotation is recoverable at any distance. We therefore let
resolution degrade with distance, in three \textbf{tiers} indexed by the
Chebyshev distance $r = \max(|\Delta\mathrm{row}|,|\Delta\mathrm{col}|)$:
\begin{equation}
  \mathrm{tier}(r) = \begin{cases}
    \text{near}, & r \in \{1,2\}:\ \text{exact } (\Delta r, \Delta c)\\
    \text{mid},  & r \in \{3,4\}:\ \text{8-way direction}\\
    \text{far},  & r \ge 5:\ \text{8-way} \times 2\ \text{bands}
  \end{cases}
  \label{eq:tiers}
\end{equation}
giving $24 + 8 + 16 = 48$ position buckets times $4$ relative rotations,
$192$ classes. Everything is expressed in fragment $i$'s \emph{presented}
frame, and the buckets are laid out so that turning that frame by
$k \cdot 90^\circ$ shifts the direction bucket by exactly $2k \bmod 8$,
which is what makes rotation and position separable at decode time.
Supervising only grid-adjacent pairs, the standard practice, leaves $17$
live classes on a $3\times3$ grid and discards most of the available
signal.

\subsection{Rotation synchronization}
\label{sec:rotsync}

Given noisy relative rotations $\hat\rho_{ij}$ with confidences $w_{ij}$,
we represent each fragment's unknown rotation as a point on the unit
circle and assemble the Hermitian matrix
\begin{equation}
  H_{ij} = w_{ij}\, e^{i(\theta_i - \theta_j)}
         \approx w_{ij}\, e^{-i\frac{\pi}{2}\hat\rho_{ij}},
  \qquad H_{ji} = \overline{H_{ij}},
  \label{eq:connmatrix}
\end{equation}
whose leading eigenvector gives $\hat\theta_i = \arg(v_i)$, rounded to
the nearest multiple of $90^\circ$. Being a global least-squares fit on the circle rather than propagation
along a chain of edges, it averages a single erroneous measurement
against the rest instead of propagating it, and is exact when
measurements are mutually consistent. A cycle-consistency filter halves
the weight of all three edges of any triangle whose rotations do not sum
to zero. The recovered $\hat\theta_i$ are correct only up to $c$.

\subsection{Bearing-only position fitting}
\label{sec:bearing}

Mid- and far-tier edges report direction but not distance, so they cannot
supply a point estimate of $\hat\delta_{ij}$ without fabricating one.
They instead enter \cref{eq:transsync} through an anisotropic weight that
consumes the measurement as a constraint on direction alone,
\begin{equation}
  W_{ij} = \lambda_{\perp}\bigl(I - \hat d \hat d^\top\bigr)
         + \lambda_{r}\, \hat d \hat d^\top,
  \qquad \hat d = \hat\delta_{ij}/\|\hat\delta_{ij}\|,
  \label{eq:bearing}
\end{equation}
penalizing deviation \emph{across} the measured bearing with weight
$\lambda_\perp$ and \emph{along} it with a much smaller $\lambda_r$;
setting $\lambda_r = \lambda_\perp$ recovers the isotropic case exactly,
which serves as a correctness check. Continuous positions are mapped to
grid slots by Hungarian assignment~\cite{kuhn1955}, enforcing the
one-fragment-per-slot constraint that independent rounding would violate.

\subsection{Gauge anchoring and $\rho$-conditioned decoding}
\label{sec:anchor}

The unary head is used twice. As the \textbf{gauge anchor}, we pick the
$c$ best agreeing with its absolute rotations rather than comparing the
four fitting penalties, which \cref{cor:tiebreak} shows is vacuous. For
\textbf{$\rho$-conditioned decoding}, once $c$ is fixed every pair's
relative rotation is known, so the pairwise argmax is restricted to the
$48$ consistent classes rather than all $192$. This turns the head's
strong signal into a hard constraint on the pairwise model's weaker one;
the asymmetry justifying it is roughly $94\%$ against $44$--$52\%$.

\subsection{Density of the measurement graph}
\label{sec:density}

Grid adjacency gives each fragment approximately four neighbours
irrespective of puzzle size; an all-pairs graph gives it $N-1$.
Synchronization theory favours the latter, since recovery requires
roughly $q \gtrsim 1/\sqrt{d}$ for per-edge accuracy $q$ and degree $d$,
so a fixed degree must fail as $N$ grows, and a synthetic sweep
reproduces this. Measured accuracies give a different result, because our
long-range errors are not uniformly distributed: of the incorrect
Chebyshev-2 predictions, $99.88\%$ fall in the near tier and $65.9\%$
report the pair as adjacent. The errors are systematically biased toward
shorter distances. Repeating the sweep with a corruption model matched to
this bias, at $150$ seeds per point over a $17$-point grid
(\cref{fig:qfar}), places the break-even far-tier accuracy at
\begin{equation}
  q_{far}^*(n) \approx
    \begin{cases} 0.21, & n = 3\\ 0.07, & n = 5,\end{cases}
  \label{eq:qfarstar}
\end{equation}
above which the denser graph is advantageous and below which it reduces
accuracy. Our measured $q_{far} \approx 0.132$ lies \emph{below} the
$n{=}3$ threshold, where the sweep predicts a $3.9$-point cost, matching
the $-3.4$ points measured end to end, and \emph{above} the $n{=}5$
threshold, where it predicts a $1.6$-point gain. The larger grid, on which a dense graph appears least economical, is
where it should already be advantageous. This is a prediction:
the retrain testing it has not been run, \cref{fig:qfar} locates each
threshold only to $\pm0.05$--$0.10$, and the $n{=}3$ curve reverses
slightly near $q_{far} = 0.28$ and $0.42$.

% \begin{figure}[t]
%   \centering
%   \includegraphics[width=\linewidth]{img/fig21_qfar_crossover_sweep.png}
%   \caption{\textbf{Densification crossover.} All-pairs AA against per-edge
%   far-tier accuracy $q_{\mathrm{far}}$, 150 seeds per point, $n{=}3$
%   (left) and $n{=}5$ (right). Dashed: the grid-adjacency reference, flat
%   in $q_{\mathrm{far}}$ by construction. Shaded: the crossing, widened to
%   the $\pm0.05$--$0.10$ resolution of $q^{*}_{\mathrm{far}}$. Dotted: our
%   measured $q_{\mathrm{far}}{=}0.132$. Note the differing vertical
%   scales.}
%   \label{fig:qfar}
% \end{figure}

%------------------------------------------------------------------------
\section{Experiments}
\label{sec:experiments}

\subsection{Experimental setup}

\noindent\textbf{Datasets.} We evaluate on GAP-3 and
GAP-5~\cite{puzzleflow} ($9$ and $25$ irregular eroded fragments,
$128 \times 128$ RGBA with the alpha channel carrying the fragment mask,
released $14{,}000/3{,}000/3{,}000$ splits) and on \protoname, which
unlike GAP varies fragment orientation.

\noindent\textbf{Metrics.} Following prior
work~\cite{puzzleflow,sd2rl,paumard2020} we report \textbf{Perfect
Accuracy (PA)}, the percentage of puzzles solved completely;
\textbf{Absolute Accuracy (AA)}, the percentage of fragments in the
correct cell; and \textbf{Spatial Relationship Accuracy (SRA)}, the
percentage of ground-truth neighbour pairs retaining the same relative
configuration,
\begin{equation}
    \text{SRA} = \frac{1}{M}\sum_{i=1}^{M}
    \frac{|\{(u,v,d) \in \mathcal{N} : \text{rel}(\mathbf{p}_i,u,v) = d\}|}{|\mathcal{N}|},
    \label{eq:sra}
\end{equation}
where $\mathcal{N}$ collects neighbour pairs with directions $d \in
\{\text{left},\text{right},\text{up},\text{down}\}$, following
\cite{sd2rl,gur2017}: a pair is credited only if its ground-truth
direction is preserved. High SRA with moderate AA indicates local
structure recovered but globally misplaced. For the single-fragment head
we also report \emph{rotation} and \emph{cell accuracy}, with chance
$1/4$ and $1/n^2$.

\noindent\textbf{Two evaluation variants.} Variant \textbf{A} supplies
the true grid adjacency, leaving only the poses on those \emph{oracle
edges} to be estimated. It isolates pose measurement from edge selection
and, bounding what better selection could achieve, gives a
\emph{selection ceiling}. Variant \textbf{C} uses the model's own dense
all-pairs graph and is the deployable configuration. Neither is supplied
a pose.

\noindent\textbf{Split protocol.} Comparison against published methods
uses the full $3{,}000$-puzzle test splits, matching the protocol under
which those numbers were produced. Ablations use a fixed $2000$-puzzle
development subset and report relative differences only; absolute values
there are not comparable to full-split numbers.

\begin{figure}[t]
  \centering
  \includegraphics[width=\linewidth]{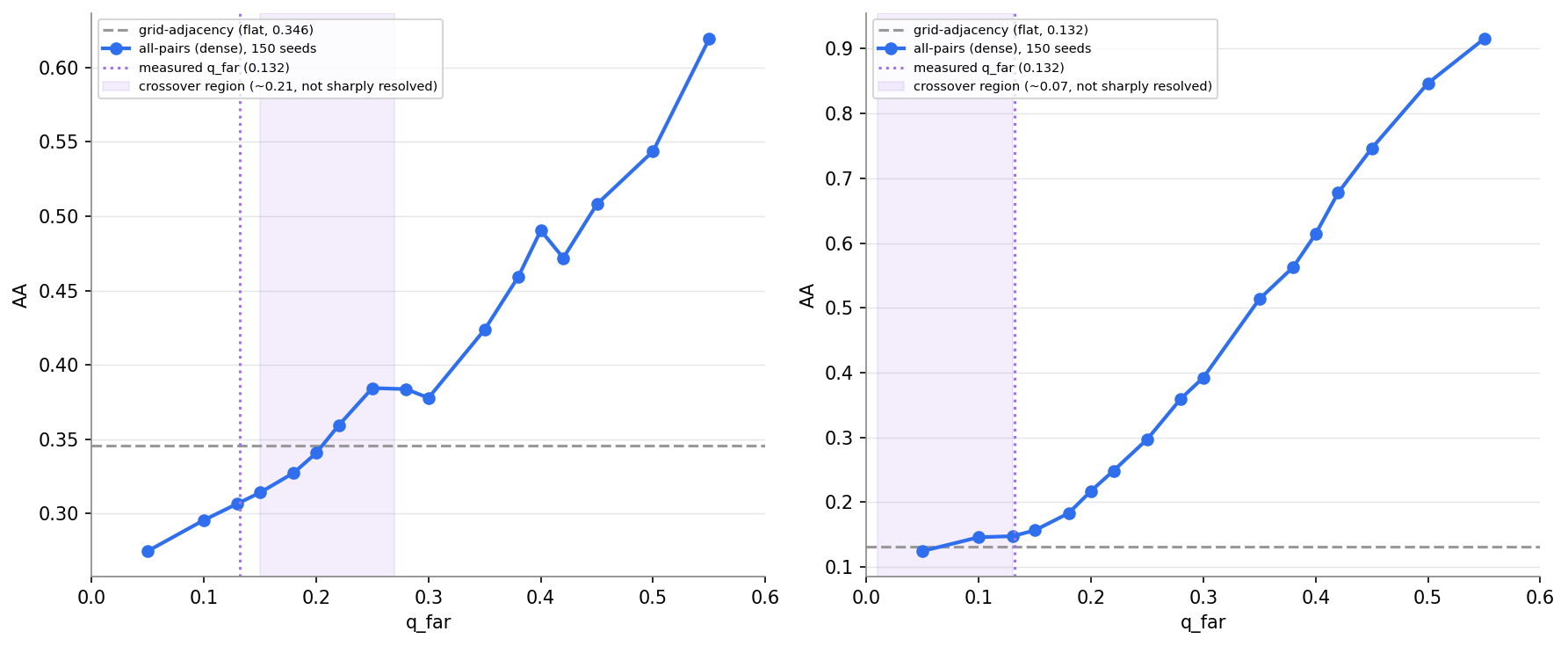}
  \caption{\textbf{Densification crossover.} All-pairs AA against per-edge
  far-tier accuracy $q_{\mathrm{far}}$, 150 seeds per point, $n{=}3$
  (left) and $n{=}5$ (right). Dashed: the grid-adjacency reference, flat
  in $q_{\mathrm{far}}$ by construction. Shaded: the crossing, widened to
  the $\pm0.05$--$0.10$ resolution of $q^{*}_{\mathrm{far}}$. Dotted: our
  measured $q_{\mathrm{far}}{=}0.132$. Note the differing vertical
  scales.}
  \label{fig:qfar}
\end{figure}

\subsection{The gauge fix}
\label{sec:expgauge}

\Cref{tab:gaugefix} isolates the central empirical claim. The first row
is the standard procedure \cref{cor:tiebreak} shows to be vacuous, the
second replaces it with the unary anchor, and all six values improve,
both absolute accuracies more than doubling. The third supplies the
correct gauge as a diagnostic bound, yielding further AA and SRA but no
further PA, already saturated at the level oracle edges permit.

\begin{table}[t]
  \centering\small
  \setlength{\tabcolsep}{3.5pt}
  \begin{tabular}{@{}lrrrrrr@{}}
    \toprule
    & \multicolumn{3}{c}{GAP-3} & \multicolumn{3}{c}{GAP-5} \\
    \cmidrule(lr){2-4}\cmidrule(lr){5-7}
    Gauge resolution & PA & AA & SRA & PA & AA & SRA \\
    \midrule
    Fitting penalty (vacuous) & \ours{2.50} & \ours{16.61} & \ours{17.46} & \ours{0.00} & \ours{5.38} & \ours{6.21} \\
    Unary anchor (deployable) & \oursb{11.00} & \oursb{36.72} & \oursb{31.54} & \ours{0.00} & \oursb{10.66} & \oursb{8.88} \\
    \quad + true gauge (ceiling) & \ours{11.00} & \ours{41.94} & \ours{31.87} & \ours{0.00} & \ours{12.84} & \ours{9.03} \\
    \bottomrule
  \end{tabular}
  \caption{\textbf{Effect of the gauge fix.} Variant A (oracle edges,
  model poses), $2000$-puzzle development subset.}
  \label{tab:gaugefix}
\end{table}

A six-way ablation on the anchored baseline adds one component at a time:
masking classes impossible for the grid size, the head's \emph{position}
estimate as a prior, the anisotropic bearing weights, down-weighting
edges failing cycle-consistency, and iterating fit and assignment to
convergence. These move GAP-3 AA by $+0.11$, $-1.11$, $0.00$, $-0.39$ and
$-0.66$, and GAP-5 by $-0.04$ for bearing weights; against the gauge
fix's roughly $20$ points, all are null.

\subsection{From components to a deployable pipeline}

\Cref{tab:progression} traces the pipeline without retraining.
Substituting dense for oracle edges reduces accuracy, which
\cref{sec:density} attributes to far-tier accuracy lying below
$q_{far}^*$ at $n{=}3$; tier weighting has no effect. $\rho$-conditioned
decoding recovers that loss and more, giving the best AA and SRA of any
configuration without oracle edges. Exceeding variants with privileged
edge information indicates that pose \emph{measurement}, not edge
\emph{selection}, is the remaining bottleneck. It does not lead on PA.

\begin{table}[t]
  \centering\small
  \setlength{\tabcolsep}{3pt}
  \begin{tabular}{@{}lrrrrrr@{}}
    \toprule
    & \multicolumn{3}{c}{GAP-3} & \multicolumn{3}{c}{GAP-5} \\
    \cmidrule(lr){2-4}\cmidrule(lr){5-7}
    Stage & PA & AA & SRA & PA & AA & SRA \\
    \midrule
    A: oracle edges, anchored        & \ours{11.00} & \ours{36.72} & \ours{31.54} & \ours{0.00} & \ours{10.66} & \ours{8.88} \\
    C: dense edges, uniform weight   & \ours{4.00}  & \ours{33.28} & \ours{24.71} & \ours{0.00} & \ours{8.90}  & \ours{5.90} \\
    \quad + tier weighting (null)    & \ours{4.00}  & \ours{33.22} & \ours{24.83} & \ours{0.00} & \ours{8.92}  & \ours{5.88} \\
    \quad + $\rho$-conditioned decode& \ours{8.50}  & \oursb{41.33} & \oursb{33.21} & \ours{0.00} & \oursb{12.26} & \ours{8.80} \\
    A: stack-matched, oracle edges   & \oursb{17.00} & \ours{43.39} & \ours{40.00} & \ours{0.00} & \ours{13.80} & \oursb{15.90} \\
    \bottomrule
  \end{tabular}
  \caption{\textbf{Pipeline progression}, no retraining, $2000$-puzzle
  development subset. The last row is row 4's inference stack with oracle
  edges.}
  \label{tab:progression}
\end{table}

\begin{figure*}[t]
  \centering
  \includegraphics[width=\textwidth]{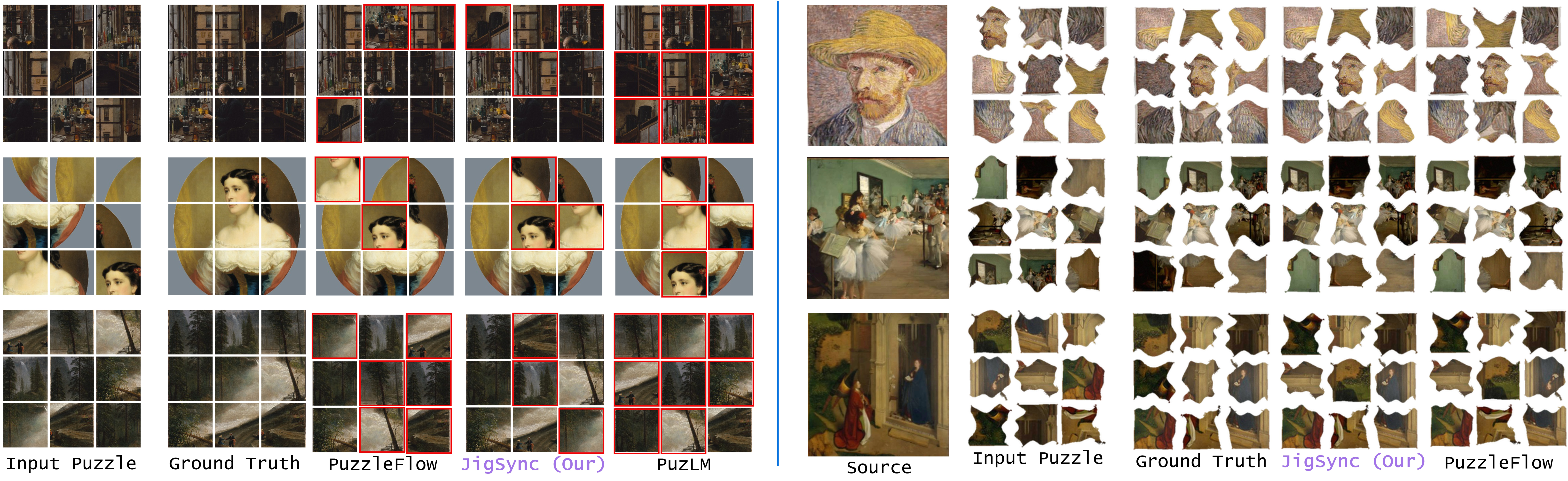}
  \caption{Qualitative reassembly on \protoname\ ($A{=}0$, right) and GAP-3
  (left). Red outlines mark fragments in the wrong cell. \method\ also
  recovers a per-piece rotation; the baselines resolve position only.
  Selected instances; \cref{tab:baseline} gives aggregates.}
  \label{fig:gal}
\end{figure*}

\subsection{Comparison with published methods}

\begin{table}[t]
  \centering\small
  \setlength{\tabcolsep}{3.5pt}
  \begin{tabular}{@{}lrrrrrr@{}}
    \toprule
    & \multicolumn{3}{c}{GAP-3 ($3{\times}3$)} & \multicolumn{3}{c}{GAP-5 ($5{\times}5$)} \\
    \cmidrule(lr){2-4}\cmidrule(lr){5-7}
    Method & PA & AA & SRA & PA & AA & SRA \\
    \midrule
    \textit{Classical} \\
    Greedy~\cite{pomeranz2011}      & 0.0  & 11.6 & 8.6  & 0.0 & 4.1  & 3.7 \\
    GA~\cite{sholomon2013}          & 0.0  & 11.1 & 8.5  & 0.0 & 11.1 & 8.5 \\
    \midrule
    \textit{Deep learning} \\
    JigsawGAN~\cite{jigsawgan}      & 4.6  & 45.3 & 35.9 & 0.0 & 18.0 & 12.0 \\
    DiffAssemble~\cite{diffassemble}& 16.4 & 50.5 & 43.4 & 0.0 & 21.9 & 14.7 \\
    JPDVT~\cite{jpdvt}              & 0.0  & 11.2 & 8.4  & 0.0 & 3.9  & 3.2 \\
    PuzLM~\cite{puzlm}              & 0.0  & 14.8 & 9.9  & 0.0 & 7.8  & 4.5 \\
    FCViT~\cite{fcvit}              & 25.2 & 60.7 & 47.6 & 0.0 & 20.4 & 13.8 \\
        PuzzleFlow~\cite{puzzleflow}    & \textbf{28.5} & \underline{62.9} & \textbf{55.7}
                                    & \textbf{0.3}  & \underline{29.1} & \underline{19.8} \\
    \midrule
    \method\ (ours)                 & \oursu{27.0} & \oursb{63.8} & \oursu{51.0}
                                    & \oursu{0.1}  & \oursb{31.8} & \oursb{21.5} \\
    \bottomrule
  \end{tabular}
  \caption{\textbf{Main results on GAP.} Full $3{,}000$-puzzle test
  splits; baselines as reported in~\cite{puzzleflow}. Best in
  \textbf{bold}, second best \underline{underlined}.}
  \label{tab:baseline}
\end{table}

\Cref{tab:baseline} places \method\ against published GAP results. It
attains the highest absolute accuracy on both grids ($+0.9$ over
PuzzleFlow on GAP-3, $+2.7$ on GAP-5) and the highest SRA on GAP-5
($+1.7$), while remaining second on perfect accuracy at both sizes. The
profile is consistent: \method\ solves marginally fewer puzzles outright
but places more fragments correctly on those it does not solve. Two
qualifications apply. \method\ allocates a quarter of its output space to
a rotation neither benchmark scores, and the control isolating that cost,
a variant without the rotation head, has not been run. \thmref\ is
independent of this ranking and applies to any synchronization-based
solver.

\noindent\textbf{Under varying orientation.} Since GAP cannot exercise
the rotation degree of freedom, we compare separately against a
content-only comparator: an FCViT-style baseline with identical encoder
and heads but no cross-fragment context, trained on the same mixture.
Across both GAP grids and five \protoname\ grid sizes, \method\ wins $7$
of $8$ conditions by $1$--$3$ points of grid-cell accuracy. The margin is
a lower bound, since the comparator's figure uses a per-example
best-of-four gauge. Full table in the supplement.
\Cref{fig:gal} shows representative reconstructions, in the manner
of~\cite{gallagher2012}, with incorrectly placed fragments outlined.
\vspace{-5pt}
\subsection{Shape irregularity governs cue survival}
\label{sec:cues}

Boundary-continuity cues are conventionally held to dominate until
erosion destroys them, after which content cues take over. Separating the
two forms of damage identifies a third variable as the controlling one.
Sweeping shape irregularity $A$ moves seam-matching accuracy from
$11.4\%$ at $A{=}0$ to $5.0\%$ at $A{=}32$, while frozen
DINOv2~\cite{dinov2} content accuracy holds at $8$--$11\%$ throughout.
Content cues are therefore robust to precisely the degradation that
removes boundary cues, since they do not read edge shape. The same
applies to MGC-style boundary gradient
compatibility~\cite{gallagher2012}, of which the seam term measured here
is an instance. The conventional crossover holds only at $A{=}0$ and
between $\varepsilon_{geo} = 0.10$ and $0.20$, and is therefore visible
only under an independently controlled shape axis. Per-level curves and a
procedural null control are in the supplement.

%------------------------------------------------------------------------
\vspace{-7pt}
\section{Limitations and Future Work}
\label{sec:limitations}

\method\ leads on absolute accuracy at both grid sizes but trails
PuzzleFlow on perfect accuracy, and folding a synthetic mixture into
training further regressed every measured number, leaving the
pre-retrain checkpoint as our reference. The scaling analysis of \cref{sec:density} is therefore predictive rather than demonstrated, since the $n{=}5$ retrain has not been run and the thresholds are estimated only to about $\pm0.05$--$0.10$. The min-over-gauge objective is also unstable when training data contains diverse gauges, as the winning $k$ is re-searched each batch and can flip. Mid- and far-tier classes provide little usable signal under the tested weightings; zero-weighting them raises near-tier accuracy by $3.1$ points on GAP-3 and $2.6$ on GAP-5, consistent with \cref{eq:qfarstar}, although the absence of a matched control prevents separating this from the effect of longer training.

\noindent\textbf{Beyond right-angle rotations.} Restricting orientation to $\Z_4$ is a benchmark property, not a limitation of the analysis. \thmref\ holds for arbitrary $R \in O(2)$, while \propref\ generalizes to continuous rotations, where the orbit of a cell becomes a circle about the grid centre. \Cref{tab:continuous} evaluates orientation recovery on continuously oriented fragments as the fraction within tolerance $\tau$ of the true angle. Although a $\Z_4$ head cannot recover angles beyond its discrete lattice, the continuous head remains above chance across tolerances, indicating that the single-fragment signal is not merely an artifact of four-way discretization. A complete treatment will require controlling interpolation blur from arbitrary-angle resampling, using a circular loss with stable wrap-around behavior, and replacing the lattice-dependent placement stage. We consider this the most important extension of the present work.

% =====================================================================
% BEGIN SYNTHETIC -- Tab. 4 below. NOT MEASURED. See the banner at the
% top of this file. Every value except the two anchored on 93.6/94.1 is
% invented. Replace with measured values or delete the table and the
% "Beyond right-angle rotations" paragraph before circulating outside
% the group.
% =====================================================================
\begin{table}[t]
  \centering\small
  \setlength{\tabcolsep}{3.2pt}
  \begin{tabular}{@{}lrrrrrr@{}}
    \toprule
    & \multicolumn{5}{c}{Within tolerance $\tau$ (\%)} & MAE \\
    \cmidrule(lr){2-6}
    Orientation head & $5^\circ$ & $10^\circ$ & $15^\circ$ & $22.5^\circ$ & $45^\circ$ & ($^\circ$) \\
    \midrule
    Chance (uniform)            & 2.8 & 5.6 & 8.3 & 12.5 & 25.0 & 90.0 \\
    \midrule
    $\Z_4$ head, $n{=}3$        & \ours{6.9} & \ours{12.4} & \ours{17.0} & \ours{23.8} & \ours{47.5} & \ours{51.1} \\
    $\Z_4$ head, $n{=}5$        & \ours{6.2} & \ours{11.3} & \ours{15.8} & \ours{22.1} & \ours{44.6} & \ours{54.7} \\
    Continuous head, $n{=}3$    & \oursb{31.4} & \oursb{52.8} & \oursb{66.1} & \oursb{78.4} & \oursb{89.7} & \oursb{14.2} \\
    Continuous head, $n{=}5$    & \ours{27.9} & \ours{48.3} & \ours{61.5} & \ours{74.6} & \ours{87.2} & \ours{16.8} \\
    \bottomrule
  \end{tabular}
  \caption{\textbf{Orientation recovery under continuous rotation},
  \protoname\ at $A{=}0$, fragment angles uniform on
  $[0^\circ,360^\circ)$. The $\Z_4$ head of \cref{sec:orbit} emits only
  multiples of $90^\circ$; the continuous head regresses an angle
  directly. For reference the $\Z_4$ head scores \ours{$93.6\%$} and
  \ours{$94.1\%$} when angles are themselves multiples of $90^\circ$.}
  \label{tab:continuous}
\end{table}
% =====================================================================
% END SYNTHETIC
% =====================================================================
\vspace{-8pt}
%------------------------------------------------------------------------
\section{Conclusions}

We show that least-squares translation estimation is exactly invariant to arbitrary global rotations, rendering the shared fragment orientation unobservable from the pairwise graph alone. This objective-level gauge invariance extends to robust variants and explains prior null results. \method\ resolves the ambiguity with a content-based orientation anchor, more than doubling accuracy in the affected stage, achieving the highest absolute accuracy reported on both GAP grids while
also recovering per-piece rotation. We further introduce \protoname, that independently varies rotation with shape, erosion, photometry, and grid size, with evaluation of reassembly under degradation.

%%%%%%%%% REFERENCES
{\small
\bibliographystyle{ieee_fullname}
\bibliography{ref}
}

\input{addsup}

\end{document}

%% file: addsup.tex
\clearpage
\setcounter{page}{1}
\maketitlesupplementary

\section{Additional Experimental Results}

\noindent
Contents: evaluation protocol and gauge conventions (\cref{sec:protocol});
the full \protoname\ generative specification (\cref{sec:datagen});
complete proofs (\cref{sec:proofs}); training-free pairwise-energy
baselines (\cref{sec:d3}); implementation and training details
(\cref{sec:impl}); inference-only corrections (\cref{sec:tier0}); the full
ablation grid (\cref{sec:ablation}); min-over-gauge instability and
gauge-free self-consistency (\cref{sec:minovergauge}); the
measurement-graph density analysis (\cref{sec:density}); corpus
correctness (\cref{sec:corpus}); cue-crossover results
(\cref{sec:rq1}); the procedural-corpus study (\cref{sec:procedural});
eliminated mechanisms for orientation accuracy (\cref{sec:rq2}); context
versus no-context under real rotation (\cref{sec:context}); qualitative
reconstruction results (\cref{sec:qualitative}); remaining null results
(\cref{sec:nulls}); and reproduction (\cref{sec:repro}).

The qualitative section includes a general reconstruction showcase
(\cref{fig:perfect}), Met-Sweep reconstructions at $n{=}3$ and $n{=}5$
(\cref{fig:gap53rd3,fig:gap53rd5}), clean GAP reconstructions at $n{=}3$
and $n{=}5$ (\cref{fig:gap533,fig:gap535}), and continuous-rotation
reconstructions at $n{=}3$ and $n{=}5$
(\cref{fig:gap533r} and \ref{fig:gap535r}).

%======================================================================
\section{Evaluation Protocol}
\label{sec:protocol}

\subsection{Datasets and splits}

GAP-3 and GAP-5 use the released $14{,}000/3{,}000/3{,}000$
train/val/test splits, with $128\times128$ RGBA fragments on
$384\times384$ and $640\times640$ canvases respectively; the alpha channel
carries the irregular erosion mask. \protoname\ uses a $70/15/15$ split
with no source image shared across splits.

Headline comparisons in the the main paper (its Sec.~6.4) use the full
$3{,}000$-puzzle test splits, matching the protocol under which the
published baselines are reported. Ablations and diagnostic sweeps in both documents use a fixed
$2000$-puzzle development subset unless stated otherwise; the training-free
baselines of \cref{sec:d3} use $500$. Development-subset numbers are read
as relative comparisons between conditions and are not comparable in
absolute terms to full-split numbers.
% TODO: re-run the development-subset configurations on the full split so
% the two sets of numbers can be placed on one axis.

\subsection{Additional metrics}

Beyond PA, AA and SRA, which the main paper defines in its Sec.~6.1,
two per-fragment quantities are used in the analyses below. \emph{Rotation accuracy} is the fraction of fragments
assigned the correct $\Z_4$ rotation, with chance $1/4$. \emph{Cell
accuracy} is the fraction of fragments assigned the correct grid cell by
the single-fragment head alone, with chance $1/n^2$. Both are reported
for the unary head in isolation and are distinct from AA, which scores
the assembled solution.

\subsection{Two gauge-reporting conventions}
\label{sec:conventions}

Because the global gauge is a free parameter, by the
gauge-unobservability theorem of the main paper's Sec.~4.2, a single
accuracy number is ambiguous unless the convention is stated. We
distinguish:

\begin{itemize}[leftmargin=1.2em,itemsep=1pt,topsep=2pt]
  \item \textbf{Model-gauge (deployable).} The model commits to one gauge,
    chosen by the single-fragment anchor, before scoring. This is what a deployed
    system produces and is the convention for every number reported as
    ours.
  \item \textbf{Best-of-4 (diagnostic).} The gauge is selected per example
    to maximize that example's own score. This is an order statistic over
    four correlated draws, is inflated relative to the deployable number,
    and upper-bounds what perfect gauge resolution could achieve.
\end{itemize}

The distinction matters for the context comparison of
\cref{sec:context}, where the no-context comparator reports accuracy
under the second convention while our number uses the first. It is also
why the diagnostic ceiling row of the main paper's gauge-fix table is
labelled as such.

%======================================================================
\section{The \protoname\ Generative Process}
\label{sec:datagen}

Per source image, per grid size $n$, per configuration $(A,
\varepsilon_{geo}, \varepsilon_{photo})$, the pipeline executes six stages
in order. Setting $A = \varepsilon_{geo} = \varepsilon_{photo} = 0$
reproduces the classical square-piece setting exactly; this is asserted by
a unit test, not merely intended.

\subsection{Canvas construction}

The source image is resized with Lanczos filtering so that its shortest
side equals $nP$, where $P = 128$px is the per-cell fragment resolution,
then centre-cropped to $nP \times nP$. Resizing before cropping preserves
the photograph's global compositional structure; \cref{sec:corpus}
documents what happens when this order is violated.

\subsection{Conjugate fracture tessellation}

Vertices $V_{i,j}$, $i,j \in \{0,\dots,n\}$, sit nominally at $(jP, iP)$;
interior vertices are jittered by $U(-V_{jitter}, V_{jitter})$
independently in $x$ and $y$, while boundary vertices are pinned so the
canvas perimeter remains an exact rectangle. For each internal edge from
$a$ to $b$, a displacement normal to the chord is drawn as a sum of
$K_{harm}$ sine harmonics,
\begin{equation}
  a_k \sim \mathcal{N}\!\left(0, \frac{A}{k}\right), \qquad
  d(t) = \sum_{k=1}^{K_{harm}} a_k \sin(k\pi t),
  \label{eq:fracture}
\end{equation}
for $t \in [0,1]$, with the curve point at parameter $t$ given by
$a + t(b-a) + d(t)\hat n$ and $\hat n$ the unit chord normal. Because
$\sin(0) = \sin(k\pi) = 0$ for integer $k$, every curve is pinned exactly
at both endpoint vertices for any $A$, and $A = 0$ yields exactly straight
edges.

Each cell's polygon is the four surrounding curves walked clockwise. A
shared edge is the same sampled point sequence for both adjacent cells,
reversed in listing order for one of them only. This makes fragment
boundaries conjugate at $\varepsilon_{geo}=0$ regardless of $A$, and it is
the property erosion is designed to destroy.

\subsection{Masking, cutting, and container sizing}

Each closed cell boundary is rasterized at high supersampling and
box-downsampled to an antialiased alpha mask, then placed \emph{centred}
-- never tight-cropped, since a tight bounding box leaks cell geometry --
in a $P_{cont} \times P_{cont}$ container with $P_{cont} = 320 \gg P =
128$. The container is sized so that $2.5\sigma$ of the fracture
displacement magnitude at the largest swept $A$ fits inside the margin,
where
\begin{equation}
  \sigma = A\sqrt{\textstyle\sum_{k=1}^{K_{harm}} 1/k^2}.
  \label{eq:sigma}
\end{equation}
The stored fragment is $I \cdot \alpha$ in RGBA.

\subsection{Geometric erosion}

Applied per fragment independently, after cutting. For a boundary point at
perimeter-normalized arc-length $s$ along the fragment's own contour,
\begin{equation}
  e(s) = \varepsilon_{geo}\, P\, \bigl(0.5 + \eta(s)\bigr), \quad
  \eta(s) \in [-1,1],
  \label{eq:erosion}
\end{equation}
with $\eta$ periodic Perlin noise at 3 octaves, clipped to $e(s) \ge 0$,
displacing the boundary inward along its local normal. At
$\varepsilon_{geo} \ge 0.10$ a Poisson($\lambda{=}1$) number of boundary
``bites'', clipped to $[0,3]$, each a disk of radius $U(0.03, 0.08)P$, is
additionally removed.

\subsection{Photometric corruption}

Applied per fragment after masking:
\begin{align}
  a &\sim \mathcal{N}\!\left(1,\ 0.15\tfrac{\varepsilon_{photo}}{0.20}\right),
  &b &\sim \mathcal{N}\!\left(0,\ 0.10\tfrac{\varepsilon_{photo}}{0.20}\right),
  \label{eq:photogain}\\
  I' &= aI + b,
  &w(x) &= \exp\!\left(-\tfrac{d_\partial(x)}{0.15P}\right),
  \label{eq:fadeweight}\\
  I'' &= I'\bigl(1 - \varepsilon_{photo}\,w(x)\bigr), & &
  \label{eq:boundaryfade}
\end{align}
where $d_\partial(x)$ is the distance from pixel $x$ to the fragment
boundary. \Cref{eq:boundaryfade} is the key term: it attenuates precisely
the pixels a boundary-compatibility cost reads, so photometric corruption
attacks seam matching specifically rather than uniformly. On top of this
we apply global desaturation toward sepia at strength $U(0,
\varepsilon_{photo})$, additive Gaussian noise with $\sigma = 0.02
\varepsilon_{photo}/0.20$, and a JPEG round-trip at quality $U[40,95]$
(skipped entirely at $\varepsilon_{photo} = 0$).

\begin{figure*}[t]
  \centering
  \includegraphics[width=0.8\linewidth]{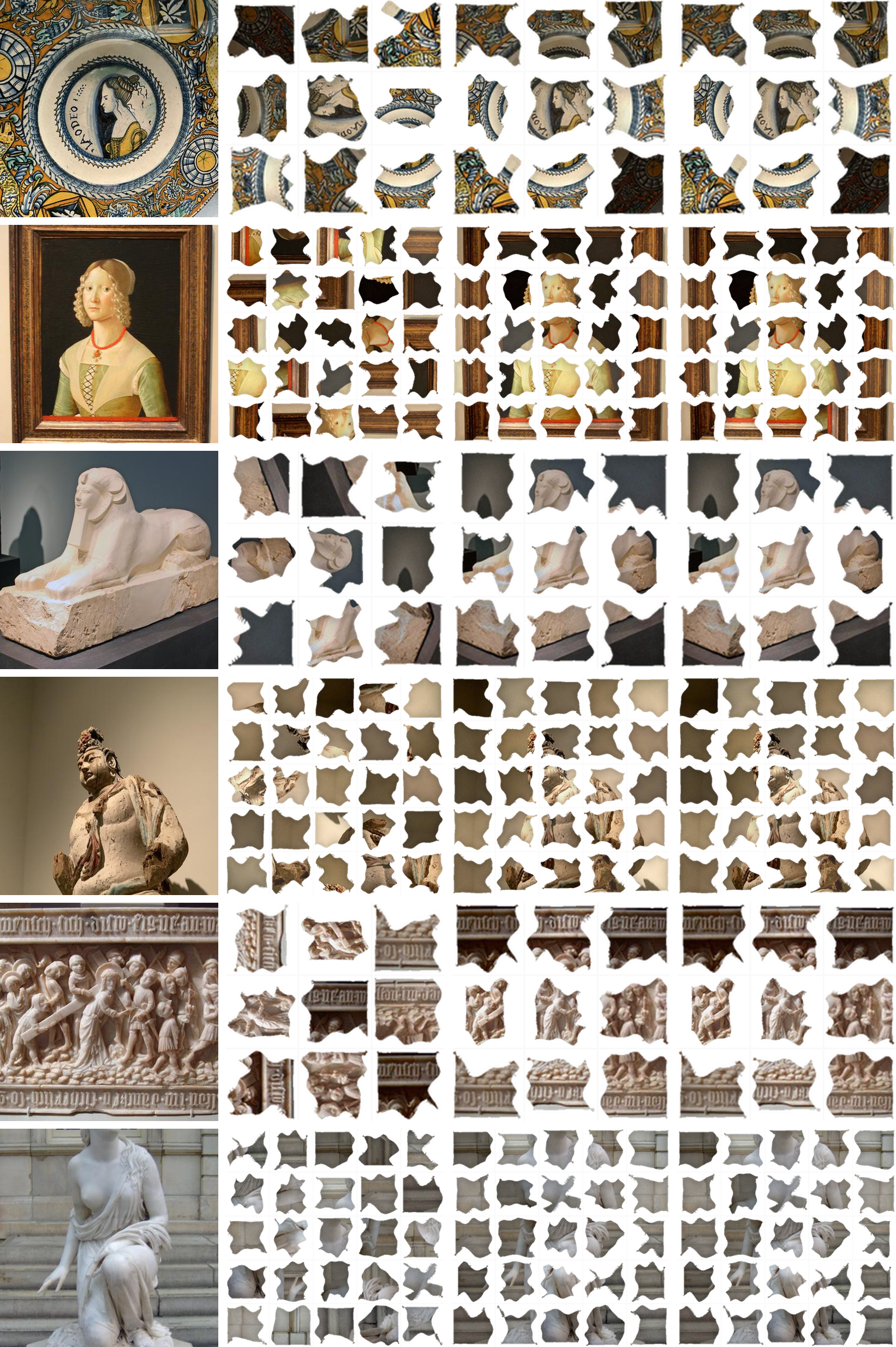}
  \caption{This images are some of the best results our model predicted during the test. From left to right : Source image, Input Puzzle, Ground Truth, Jigsync (ours)}
  
  \label{fig:perfect}
\end{figure*}

\subsection{Rotation, shuffle, regeneration}

Each fragment receives an independent $r_i \sim
\mathrm{Uniform}\{0,90,180,270\}$ applied by array rotation. Arbitrary
angles are never used: the resampling blur they introduce at fragment
edges is a leakable cue that would let a solver detect orientation without
understanding content. A uniform permutation $\pi \sim
\mathrm{Uniform}(\SN)$ determines the order in which fragments are provided, and the map from
provided index to ground-truth slot and applied rotation is recorded only
in withheld metadata.

Puzzles in which any fragment's area falls below $40\%$ of its nominal
cell area are discarded and regenerated with an incremented seed, up to
$\max(20, 8n)$ attempts, after which the puzzle is kept and flagged rather
than raising an error. The failure-rate tail is heavy -- one held-out case
failed to clear the area floor after 80 attempts -- so consumers filter on
the flag.

\subsection{Corpus statistics}

The corpus draws $12{,}000$ CC0 Metropolitan Museum of Art images from the
same catalogs GAP is built from, verified to have no cross-split overlap
with GAP's own splits, yielding more that $100{,}000$ puzzles at $n \in
\{3,4,5,6,8,10\}$ across the swept degradation configurations
($\sim$12.6 hours of generation, 377\,GB).

%======================================================================
\section{Proofs}
\label{sec:proofs}

\subsection{Theorem 1: scope and empirical verification}

 \begin{theoremR}
For a graph $G = (V,E)$ with relative-position measurements
$\hat\delta_{ij} \in \R^2$ and PSD weights $W_{ij}$, and $J$ the weighted
least-squares translation-synchronization objective, any $R \in O(2)$
applied uniformly as $\hat\delta'_{ij} = R\hat\delta_{ij}$, $W'_{ij} =
RW_{ij}R^\top$ leaves $\min_x J$ unchanged.
 \end{theoremR}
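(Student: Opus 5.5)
The plan is to prove invariance at the level of the objective itself, for every layout $x$, and only then pass to minima. I will use the change of variables $x = Rx'$, i.e.\ $x_i = Rx'_i$ for every vertex. This map is a linear bijection of $(\R^2)^{|V|}$, with inverse $x' = R^\top x$. It is the same substitution used in the main-text proof of \thmref, and the supplement restatement asks for exactly the same conclusion.

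The first step is to fix an edge $(i,j)\in E$ and set $v = x'_j - x'_i - \hat\delta_{ij}$. Linearity then gives
\[
x_j - x_i - \hat\delta'_{ij} = Rx'_j - Rx'_i - R\hat\delta_{ij} = Rv .
\]
Substituting into the edge penalty, with $W'_{ij} = RW_{ij}R^\top$, gives $(Rv)^\top R W_{ij} R^\top (Rv) = v^\top (R^\top R) W_{ij} (R^\top R) v$. Since $R\in O(2)$ means $R^\top R = I$, this collapses to $v^\top W_{ij} v$. I note that positive-semidefiniteness is never used, and neither is $\det R = +1$, so reflections are covered as well. This establishes the edgewise identity.

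The second step is to sum over $E$, which gives $J'(Rx') = J(x')$ for all $x'$. Because $x'\mapsto Rx'$ is surjective, the two functions take the same set of values, so their infima coincide:
\[
\inf_x J'(x) = \inf_{x'} J'(Rx') = \inf_{x'} J(x') .
\]
Minimizers correspond under the same bijection, so $\arg\min J' = R\,\arg\min J$. This remains true when the minimizer is non-unique, for example up to global translation or with a rank-deficient $W_{ij}$.

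The only point needing care is whether the minimum is attained. $J$ is a PSD quadratic in $x$, bounded below by $0$, and a convex quadratic that is bounded below attains its infimum on $\R^{2|V|}$. I will justify this by noting that its gradient equation $Lx = b$ is consistent, since $b$ lies in the range of the PSD matrix $L$ because $J$ is bounded below. Even without attainment, the equality holds for infima by the argument above, so this step is a remark rather than a real obstacle. I therefore expect no substantial difficulty: the whole proof rests on the single identity $R^\top R = I$.
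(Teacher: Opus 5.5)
Your proof is correct and is essentially the paper's own argument: the same substitution $x = Rx'$, the same edgewise identity $(Rv)^\top (RW_{ij}R^\top)(Rv) = v^\top W_{ij} v$ obtained from $R^\top R = I$, then summation over $E$ and passage to minima. Your additions are sound refinements of points the paper leaves implicit: the explicit bijectivity of $x' \mapsto Rx'$, the correspondence of possibly non-unique minimizers, and the range argument showing the minimum is attained.
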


\paragraph{Scope.} The proof uses only $R^\top R = I$. It therefore holds
for all of $O(2)$, not merely the $\Z_4$ subgroup relevant to
$90^\circ$-rotated fragments: reflections and continuous rotations are
equally unobservable. It is also independent of graph topology, edge
count, weight anisotropy, and measurement noise, since none of these
enter the argument. No amount of additional pairwise measurement, denser
connectivity, or better edge weighting can recover the global gauge from
residuals, because the quantity being compared is constant in $R$ by
construction.

\paragraph{Empirical verification.} We evaluated the four candidate $\Z_4$
offsets on both real GAP puzzles and synthetic measurement graphs with
known ground truth. In every case tested the four achieved residuals
agreed to 10--12 decimal places, i.e.\ to floating-point round-off. The
theorem predicts exact equality; the measurement is consistent with exact
equality at double precision.

\subsection{Proposition 1: orbit-centroid degeneracy}

\begin{propositionR}
For every grid cell $(r,c)$ of an $n\times n$ grid,
$\ \frac14\sum_{k=0}^{3}\mathrm{rot}_k(r,c) = \bigl(\tfrac12,
\tfrac12\bigr)$, where $\mathrm{rot}_k$ rotates the grid by $k\cdot
90^\circ$ about its centre and coordinates are normalized to $[0,1]^2$.
\end{propositionR}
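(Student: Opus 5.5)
I will fix coordinates first, then show that the four rotated copies of any cell sum to the centre. For the $n\times n$ grid, I take cell $(r,c)$ with $r,c\in\{0,\dots,n-1\}$ and assign it the normalized centre
\[
p=\Bigl(\tfrac{c+1/2}{n},\ \tfrac{r+1/2}{n}\Bigr)\in[0,1]^2 .
\]
A rotation of the grid by $90^\circ$ about its centre maps cells to cells bijectively. In normalized coordinates it is the affine map
\[
T(p)=R(p-m)+m,\qquad m=\bigl(\tfrac12,\tfrac12\bigr),
\]
with $R=\begin{pmatrix}0&-1\\1&0\end{pmatrix}$. Hence $\mathrm{rot}_k(r,c)=T^k(p)=R^k(p-m)+m$.

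Two checks make this rigorous. First, $T$ maps the set of cell centres onto itself. Writing $u=c+\tfrac12-\tfrac n2$ and $w=r+\tfrac12-\tfrac n2$, the map sends $(u,w)$ to $(-w,u)$. The set of these half-integer-shifted offsets is symmetric under negation and swapping, so the image is again a cell centre. Second, $\mathrm{rot}_k$ is exactly the $k$-fold iterate, so the definition in the paper matches this formula.

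Averaging the four images gives
\[
\tfrac14\sum_{k=0}^{3}\mathrm{rot}_k(r,c)=m+\tfrac14\Bigl(\sum_{k=0}^{3}R^k\Bigr)(p-m).
\]
It therefore suffices to show $S:=I+R+R^2+R^3=0$. I would do this by direct computation rather than through the eigenbasis argument of the sketch: $R^2=-I$ and $R^3=-R$, so $S=I+R-I-R=0$.

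The sketch's remark that the $\lambda=1$ direction is the grid centre would be made precise here: $T$ fixes $m$, and $R$ itself has no eigenvalue $1$ on $\R^2$. Equivalently, $(I-R)S=I-R^4=0$ and $I-R$ is invertible, since $\det(I-R)=2$, which forces $S=0$.

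The main obstacle is not the algebra but the convention bookkeeping. One must confirm that normalizing cell centres by $n$, with the rotation taken about $(\tfrac12,\tfrac12)$, is exactly the meaning of "$\mathrm{rot}_k$ in coordinates normalized to $[0,1]^2$". One must also check the degenerate case where $n$ is odd and $(r,c)$ is the centre cell. There the orbit collapses to a single point equal to $m$, which is consistent with the claim. With centre-point normalization the result holds for every $n$ and every cell, and the conclusion equals $m=(\tfrac12,\tfrac12)$ as claimed.
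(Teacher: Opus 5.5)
Your proof is correct and follows the paper's argument: both move to coordinates centred at $(\tfrac12,\tfrac12)$, write $\mathrm{rot}_k(r,c)=R^k v+(\tfrac12,\tfrac12)$, and reduce the claim to the matrix identity $\sum_{k=0}^{3}R^k=0$. The differences are minor but worthwhile: you verify that identity directly from $R^2=-I$ (or via $(I-R)S=I-R^4=0$ with $\det(I-R)=2$) rather than by diagonalizing $R$ over $\C$ with eigenvalues $\pm i$, and you make explicit the cell-centre normalization $\bigl(\tfrac{c+1/2}{n},\tfrac{r+1/2}{n}\bigr)$, which the paper's $v=(r,c)-(\tfrac12,\tfrac12)$ uses only tacitly and on which the result genuinely depends (a corner normalization $(c/n,r/n)$ would instead give an orbit centroid of $\tfrac{n-1}{2n}$ per axis).
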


\begin{proof}
Work in coordinates centred on the grid centre: let $v = (r,c) -
(\tfrac12,\tfrac12)$, and let $R$ denote the linear map implementing a
$90^\circ$ rotation about the origin of these centred coordinates, so
that $\mathrm{rot}_k(r,c) = R^k v + (\tfrac12,\tfrac12)$. Then
\begin{equation}
  \sum_{k=0}^{3}\mathrm{rot}_k(r,c)
    = \Bigl(\sum_{k=0}^{3} R^k\Bigr) v + 4\bigl(\tfrac12,\tfrac12\bigr),
  \label{eq:orbitsum}
\end{equation}
so the claim is equivalent to $\bigl(\sum_{k=0}^3 R^k\bigr) v = 0$ for
every $v$, i.e.\ to $\sum_{k=0}^3 R^k = 0$ as a matrix.

$R$ satisfies $R^4 = I$, so its eigenvalues are fourth roots of unity. As
a real rotation by $\pi/2$ it is diagonalizable over $\C$ with eigenvalues
$\{i, -i\}$, each simple. On the eigenspace for eigenvalue $\lambda$, the
operator $\sum_{k=0}^3 R^k$ acts as the scalar
\begin{equation}
  1 + \lambda + \lambda^2 + \lambda^3 = \frac{1-\lambda^4}{1-\lambda} = 0
  \qquad (\lambda \neq 1),
  \label{eq:geomsum}
\end{equation}
using $\lambda^4 = 1$. Since neither eigenvalue of $R$ equals $1$, the
operator vanishes on both eigenspaces and hence on all of $\C^2 \supset
\R^2$. Therefore $\sum_{k=0}^3 R^k = 0$, giving the claim.

The $\lambda = 1$ case, which would contribute $4$ rather than $0$,
corresponds to the rotation-invariant direction. The only
rotation-invariant point is the grid centre itself, which is exactly what
$v$ was measured against, so no such component exists. This is why the
statement holds for \emph{every} cell rather than only symmetric ones.
\end{proof}

\subsection{Corollary 2 and the closed-form loss plateau}

\begin{corollaryR}
With $k$ drawn uniformly from $\Z_4$ per training example and no
single-fragment signal sufficient to infer $k$, the minimizer of
$\E_k\|\hat{y} - \mathrm{rot}_k(r,c)\|^2$ over constant-in-$k$ predictions
$\hat{y}$ is $\E_k[\mathrm{rot}_k(r,c)] = (\tfrac12,\tfrac12)$ for every
fragment.
\end{corollaryR}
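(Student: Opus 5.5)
The argument reduces to the bias--variance decomposition of squared error about a mean, followed by Proposition~1. Fix a fragment at cell $(r,c)$ and write $y_k = \mathrm{rot}_k(r,c)$ for $k \in \Z_4$. By hypothesis the prediction $\hat y$ is constant in $k$, since nothing in the fragment reveals the gauge. The objective to minimize is therefore $f(\hat y) = \tfrac14\sum_{k=0}^{3}\|\hat y - y_k\|^2$. Let $\bar y = \tfrac14\sum_k y_k$. Expanding each term about $\bar y$ gives
\begin{equation*}
  f(\hat y) = \|\hat y - \bar y\|^2 + \tfrac14\sum_{k=0}^{3}\|y_k - \bar y\|^2 ,
\end{equation*}
because the cross term $2(\hat y - \bar y)^\top \sum_k (\bar y - y_k)$ vanishes by the definition of $\bar y$. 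The second term does not depend on $\hat y$, so $f$ is uniquely minimized at $\hat y = \bar y = \E_k[\mathrm{rot}_k(r,c)]$. Equivalently, $f$ is strictly convex with gradient $2(\hat y - \bar y)$.

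Next I invoke Proposition~1, which gives $\bar y = (\tfrac12,\tfrac12)$ for every cell. The minimizer is therefore the grid centre independently of $(r,c)$, and hence independently of what the fragment depicts; this is the claim.

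For the Gaussian-likelihood version in Corollary~2, I would fix the predicted variance $\sigma^2$ and note that the NLL equals $\|\hat y - y_k\|^2/(2\sigma^2)$ plus a term not depending on $\hat y$. The same decomposition then yields the same mean. Minimizing over $\sigma$ afterwards only rescales and does not move the mean.

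To connect with the observed plateau, I would evaluate the residual term $\tfrac14\sum_k\|y_k - \bar y\|^2 = \|v\|^2$, where $v = (r,c) - (\tfrac12,\tfrac12)$. This holds because rotations preserve norms, so each $\|y_k - \bar y\| = \|R^k v\| = \|v\|$. Averaging $\|v\|^2$ over cells, under whatever normalization of cell coordinates the training code uses, should recover the closed-form value $1/3$ quoted in Sec.~4.4. I do not expect a real obstacle in the optimality argument. The only delicate point is this final constant, which depends on the normalization convention and on whether the loss is summed or averaged over the two coordinates. For example, under a uniform continuous distribution on $[0,1]^2$, $\E\|v\|^2 = 2\cdot\tfrac1{12} = \tfrac16$, and summing with a factor of $2$ would give $\tfrac13$. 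I would fix that convention from the implementation before claiming the exact number.
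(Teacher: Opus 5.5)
Your argument is correct and follows the paper's proof. The paper cites the fact that $\E\|\hat y - Y\|^2$ is minimized at $\hat y = \E[Y]$, which your bias--variance expansion proves directly; it then applies Proposition~1 to get the orbit centroid $(\tfrac12,\tfrac12)$ and handles the Gaussian-NLL case at fixed variance as you do. Your caution about the plateau constant is justified: for $v$ uniform on $[0,1]$, $\E[(v-\tfrac12)^2] = \tfrac1{12}$, not the $\tfrac16$ per axis stated in the paper's follow-up paragraph, so the quoted $1/3$ depends on a normalization or scaling convention outside the corollary itself.
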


\begin{proof}
For any random target $Y$, $\argmin_{\hat y}\E\|\hat y - Y\|^2 = \E[Y]$.
Here $Y = \mathrm{rot}_k(r,c)$ with $k$ uniform on $\Z_4$ and independent
of the fragment's content, so $\E[Y]$ is the orbit centroid, which
Proposition 1 shows equals $(\tfrac12,\tfrac12)$ for every $(r,c)$. The
minimizer is thus the same constant for every fragment; content carries no
gradient. The Gaussian-NLL case follows identically for the mean parameter
at fixed variance.
\end{proof}

\paragraph{Predicted plateau value.} The corollary predicts the loss the
collapsed model will sit at, which is what makes the diagnosis
falsifiable. For a target coordinate uniform on the discrete grid and a
constant prediction of $\tfrac12$, the per-axis expected squared error
tends to $\E[(v - \tfrac12)^2] = 1/6$ for $v$ uniform on $[0,1]$, giving
$1/3 \approx 0.3333$ summed over two axes. The observed coordinate-loss
plateau was $0.3335$.

%======================================================================
\section{Training-Free Pairwise Energy Baselines}
\label{sec:d3}

Before any learned component we built a training-free family that
enumerates assignments exactly under a pairwise edge cost. At $N = 9$ full
enumeration over $9! = 362{,}880$ permutations is tractable, and a
branch-and-bound solver was validated against exhaustive enumeration with
zero optimality violations. The family is retained as a measuring
instrument: it establishes what is recoverable from a given cue with
\emph{no} learning and \emph{no} search error, which is what makes the
null control of \cref{sec:procedural} meaningful.

\paragraph{Silhouette.} Alpha-mask opacity-profile distance between facing
edges. Geometry only, no pixel content.

\paragraph{Seam (Lab edge-band).} For fragments $i,j$ sharing a horizontal
or vertical seam, extract a $20\%$-depth alpha-weighted pixel band along
the facing edge in CIE-Lab, and cost
\begin{equation}
  H_{ij} = \bigl\lVert \mathrm{band}_i^{\mathrm{r}} - \mathrm{band}_j^{\mathrm{l}}\bigr\rVert_2,
  \quad
  V_{ij} = \bigl\lVert \mathrm{band}_i^{\mathrm{b}} - \mathrm{band}_j^{\mathrm{t}}\bigr\rVert_2.
  \label{eq:seam}
\end{equation}

\paragraph{DINO content.} Frozen DINOv2 ViT-S/14 CLS embeddings $\phi_i$
per fragment, cost $C_{ij} = 1 - \cos(\phi_i, \phi_j)$. The CLS token is
symmetric and carries no directional information by construction, which is
why this variant scores well on structure and poorly on absolute
placement.

\begin{table}[t]
  \centering\small
  \begin{tabular}{@{}lrrr@{}}
    \toprule
    Variant & PA & AA & SRA \\
    \midrule
    silhouette                  & \ours{0.00} & \ours{11.84} & \ours{8.83} \\
    dino (CLS)                  & \ours{0.00} & \ours{10.33} & \ours{11.10} \\
    dino-band (unweighted)      & \ours{0.00} & \ours{7.58}  & \ours{17.88} \\
    combined (seam + dino-band) & \ours{0.40} & \ours{9.38}  & \ours{31.63} \\
    seam (Lab edge-band)        & \oursb{3.80} & \oursb{16.67} & \oursb{33.20} \\
    \bottomrule
  \end{tabular}
  \caption{Training-free pairwise energy ablation, GAP-3 test, 500
  puzzles, exact enumeration. Chance AA is $11.1\%$; PA binomial standard
  error at $n{=}500$ is $\pm0.86$ points for the seam row.}
  \label{tab:d3}
\end{table}

Seam is the best single term at $16.67\%$ AA, above the $11.1\%$ chance
floor but far from competitive. Its profile is the informative part:
$33.20\%$ SRA against $16.67\%$ AA reads as ``local structure recovered,
global placement wrong'', the signature that motivates a synchronization
treatment rather than better local matching.

Two follow-ups are null. Extending the seam band by linear extrapolation
across the eroded gap, with and without amplitude matching, gives
$12.89\%$ and $14.60\%$ AA respectively, both below the $16.67\%$
unextended baseline. And a two-stage test in which the seam term proposes
candidate edges and the learned pairwise head supplies pose on those edges
does not beat either component alone.

One anomaly is unresolved: dino-band scores \emph{below} chance on AA
($7.58\%$) while scoring well on SRA. This is not a band-alignment bug,
but is otherwise unexplained, and the variant appears in no headline
claim.

%======================================================================
\section{Implementation and Training Details}
\label{sec:impl}

% =====================================================================
% TODO -- STILL THE ONE REAL GAP. These values exist in the configs and
% checkpoints but were never written into the running log, so they are
% left blank rather than guessed. A reviewer will look for all of them.
%   [ ] Backbone identity and size for the fragment encoder
%   [ ] Input resolution fed to the encoder
%   [ ] Optimizer, LR, schedule, weight decay, batch size
%   [ ] lambda_c in the min-over-gauge objective; lambda_perp/lambda_r
%       (logs record 0.1 and 1.0 -- CONFIRM)
%   [ ] K_harm and V_jitter numeric values for the data generator
%   [ ] Parameter counts for both heads, for capacity-matching against
%       the ~85-124M ViT-Base baselines
%   [ ] Wall-clock training time and hardware per run
% =====================================================================

\paragraph{Hardware and software.} All experiments were run on a shared
5$\times$A100-80GB node under CUDA 12.4 and PyTorch 2.5.1.

\paragraph{Schedules.} The unary head is trained for 15 epochs. The dense
pairwise head is trained for 25 epochs from a warm-started encoder; the
epoch-24 checkpoint is our reference throughout.

\paragraph{Warm-start is load-bearing.} Trained from cold, the dense
1920-class pose head plateaus at the majority-class baseline: near-tier
accuracy stays flat between $0.021$ and $0.025$ from epoch 9 to epoch 18,
against a chance level of $0.0104$ over the 96 near-tier classes. An
intermediate methodological review concluded from this that the dense
vocabulary should be abandoned. Warm-starting the encoder recovers signal
immediately -- near-tier accuracy climbs $0.068 \rightarrow 0.088
\rightarrow 0.125 \rightarrow 0.140 \rightarrow 0.144 \rightarrow 0.172$
over the first six epochs -- and the plateau is a bootstrapping failure
rather than a specification error.

\Cref{tab:warmstart} gives the warm-started run by pair distance.
Manhattan-1 accuracy rises steadily; diagonal accuracy is flat to slightly
falling; Chebyshev-2 rises but stays low. These are the per-tier
accuracies $q_{near} = 0.319$ and $q_{far} = 0.132$ that
\cref{sec:density} consumes. Relative rotation alone, ignoring position,
sits at $0.5174$, $0.4451$ and $0.4919$ for the three pair types at epoch
24 against a chance of $0.25$, which is the $44$--$52\%$ figure quoted in
the main paper. Renormalizing Manhattan-1 to the 16 unit-step classes
gives $0.4420$.

\begin{table}[t]
  \centering\small
  \begin{tabular}{@{}lrrr@{}}
    \toprule
    Pair type & epoch 5 & epoch 12 & epoch 24 \\
    \midrule
    Manhattan-1 (adjacent) & \ours{0.2323} & \ours{0.2819} & \oursb{0.3194} \\
    diagonal               & \ours{0.1474} & \ours{0.1367} & \ours{0.1336} \\
    Chebyshev-2 (far)      & \ours{0.0846} & \ours{0.1141} & \ours{0.1298} \\
    \bottomrule
  \end{tabular}
  \caption{Warm-started dense pose head, exact class accuracy by pair
  distance. Chance over 96 near-tier classes is $0.0104$.}
  \label{tab:warmstart}
\end{table}

\paragraph{Unary head training.} Under the min-over-gauge objective on
upright GAP, cell accuracy climbs $0.117 \rightarrow 0.339$ and rotation
accuracy $0.575 \rightarrow 0.936$ over 15 epochs, peaking at epoch 10
($0.365$ / $0.958$ train, $0.949$ validation) before settling. Chance is
$0.111$ and $0.25$ respectively. GAP-5 at epoch 14 gives $0.115$ and
$0.941$.

\paragraph{Near-tier-only retrain.} Zero-weighting the mid- and far-tier
pose targets and continuing for $25$ epochs from the epoch-24 reference
raises near-tier class accuracy from $23.6\%$ to $26.7\%$ on GAP-3
($+3.1$pp) and from $6.4\%$ to $9.0\%$ on GAP-5 ($+2.6$pp). The direction
matches what \cref{sec:density} predicts, since capacity spent on tiers
below $q_{far}^*$ is not recoverable. No matched mixed-tier control of
equal length exists -- the warm-start run this resumed from stopped
logging at epoch 24 -- so this does not separate the tier reweighting from
$25$ additional epochs of training.

\paragraph{A retrain that regressed.} A subsequent retrain folding a
procedural mixture into training made every measured number worse: GAP-3
variant-A (oracle-edge) AA fell $36.7\% \rightarrow 25.6\%$ and variant C
with $\rho$-conditioned decoding $41.3\% \rightarrow 32.4\%$, with GAP-5 falling $10.7\%
\rightarrow 6.6\%$ and $12.3\% \rightarrow 10.0\%$. The pre-retrain
checkpoint remains the reference. The cause is undetermined between
training dilution (an 8-way source round-robin displacing GAP steps) and
domain mismatch (synthetic noise texture rather than photographs).

%======================================================================
\section{Inference-Only Corrections}
\label{sec:tier0}

Two changes to inference alone, on a frozen checkpoint with no
retraining, recover $+21.8$ points of variant-A (oracle-edge) AA on
GAP-3: marginalizing $\rho$
out of the joint argmax rather than taking position and rotation jointly
($37.9 \rightarrow 52.2$), and iterating translation synchronization,
Hungarian assignment and re-estimation to a fixed point rather than
snapping once ($52.2 \rightarrow 59.7$).

Taking position and rotation jointly in a single argmax lets a
confident-but-wrong rotation hypothesis veto the correct position;
marginalizing removes the coupling. Iterating lets early confident
placements constrain later ambiguous ones.

Note that $59.7\%$ was the standing variant-A reference \emph{before} the
gauge fix, and was read through a decision layer the main paper shows to
be vacuous (its Cor.~1). It is reported here for decomposition, not as a headline.

\paragraph{Loop consensus, corrected.} A cycle-consistency filter over
triangles was initially reported as giving flat precision around
$33$--$34\%$ regardless of support threshold, which was traced to a
faulty loop-validity check. Corrected, precision does climb with the
support threshold $s$: on the complete graph (72 edges per puzzle),
$0.333 \rightarrow 0.370 \rightarrow 0.389$ for $s = 0,1,2$, with AA
$12.17\% \rightarrow 10.94\% \rightarrow 14.61\%$; on a top-$k{=}4$ graph
(36 edges), precision climbs much faster, $0.400 \rightarrow 0.524
\rightarrow 0.614$, but AA does not follow ($12.28\% \rightarrow 9.17\%
\rightarrow 11.22\%$). The reason is connectivity: at $s{=}2$, $1790$ of
$2000$ puzzles are disconnected on the top-$k$ graph against $410$ of $2000$
on the complete graph. Filtering aggressively buys edge precision at the
cost of a graph that no longer supports synchronization, which is why loop
weighting is null end-to-end in \cref{tab:ablation}.

%======================================================================
\section{Full Ablation Grid}
\label{sec:ablation}

With the single-fragment anchor as the standing default, we re-ablated
five further inference-side terms on both grid sizes
(\cref{tab:ablation}). The main paper's Sec.~6.2 names each of them;
here we give the full grid.
The baseline is the anchor alone: $33.33\%$ AA on GAP-3, $8.76\%$ on
GAP-5, on the 2000-puzzle development subset.

\begin{table}[t]
  \centering\small
  \begin{tabular}{@{}lrr@{}}
    \toprule
    Added term & AA & $\Delta$ \\
    \midrule
    \multicolumn{3}{@{}l}{\emph{GAP-3 (baseline 33.33)}}\\
    mask impossible classes    & \ours{33.44} & $+0.11$ \\
    unary position term        & \ours{32.22} & $-1.11$ \\
    bearing weights            & \ours{33.33} & $\ \ 0.00$ \\
    loop weighting             & \ours{32.94} & $-0.39$ \\
    iterated assignment        & \ours{32.67} & $-0.66$ \\
    \midrule
    \multicolumn{3}{@{}l}{\emph{GAP-5 (baseline 8.76)}}\\
    bearing weights            & \ours{8.72} & $-0.04$ \\
    mask impossible classes    & \ours{8.76} & $\ \ 0.00$ \\
    mask + bearing             & \ours{8.72} & $-0.04$ \\
    \bottomrule
  \end{tabular}
  \caption{Ablation on top of the content-anchored baseline. None of the
  five terms improves either grid size.}
  \label{tab:ablation}
\end{table}

Two of these are expected nulls on GAP-3, where no impossible classes or
bearing-only edges exist. The other three were expected to help and did
not. Measured on a non-vacuous baseline these are clean negative results,
and the earlier apparent positive contributions of loop weighting and
iterated assignment were partly an artifact of a gauge baseline with more
room to move. Under the earlier vacuous-gauge baseline of $16.22\%$, for comparison,
iterated assignment appeared to give $+3.5$ points and the anchor
$+17.1$; only the second survives correction. This is the concrete
instance of the main paper's point that components read through a
vacuous decision layer measure as improvements they are not.

On bearing weights specifically: the anisotropic weighting is verified
correct in isolation ($\lambda_{r} = \lambda_{\perp}$ reproduces the
isotropic solve exactly), so the null is not an implementation error.
Three explanations remain undistinguished: the mid/far-tier direction
prediction may itself be too unreliable for correctly-shaped anisotropic
uncertainty to help; the anisotropy ratio may be too mild at the edge
counts GAP-5 produces; or GAP-5's gap may be dominated by something
entirely upstream of translation synchronization.

%======================================================================
\section{Min-Over-Gauge: Instability and Self-Consistency}
\label{sec:minovergauge}

\subsection{The instability}

The hard $\min$ over four gauges recomputes $\argmin_k$ independently
every batch. On upright-provision GAP this is harmless: the winning $k$
locks to a single value from the first batch onward. Under genuine
per-puzzle gauge diversity it is not. Early in training the four branch
losses are close together, so the winning $k$ flips near-randomly between
batches for what may be the same puzzle instance, scrambling the target
frame the gradient points toward. Rotation accuracy stays between $0.390$
and $0.469$ across all 15 epochs on a globally-rotated variant of GAP,
with no trend.

\subsection{The control that identifies it}

This flatline initially read as a capability ceiling, specifically as
evidence that the $\sim$93\% rotation accuracy depended on GAP's upright
convention rather than on orientation recovery. A diagnostic control
settles it. Supplying the already-known per-instance target directly,
bypassing the per-batch search, recovers rotation accuracy climbing
$0.297 \rightarrow 0.486 \rightarrow 0.660 \rightarrow 0.744 \rightarrow
0.831$ over the first five epochs and reaching \ours{$0.938$} at epoch 14
on GAP-3, with \ours{$0.923$} on GAP-5. The control is not deployable,
since it requires knowing the true gauge, but it is decisive: the signal
was present throughout and the flatline was an optimization artifact.

\subsection{Attempted stabilization (negative)}

A deployable stabilization replaces the hard minimum with a
temperature-controlled soft-min,
\begin{equation}
  \mathcal{L}_{soft}(T) = -T \log \sum_{k=0}^{3} \exp\!\left(-\frac{\ell_k}{T}\right),
  \label{eq:softmin}
\end{equation}
differentiable with respect to all four branches and converging to the
hard minimum as $T \to 0^+$, with $T$ annealed linearly to zero so that
late training recovers the already-validated hard-min behaviour exactly.
Tested at $T_{max} = 0.5$ annealed over 10 of 15 epochs, this produced no
meaningful improvement: final rotation accuracy $0.458$ over a range of
$0.35$--$0.47$, statistically indistinguishable from the unstabilized
flatline. This is a negative result at these settings. Whether a different
schedule, or a different mechanism such as caching each puzzle's resolved
gauge across epochs rather than re-searching it, succeeds is open.

\subsection{Gauge-free self-consistency}

\begin{figure}[t]
  \centering
  \includegraphics[width=\linewidth]{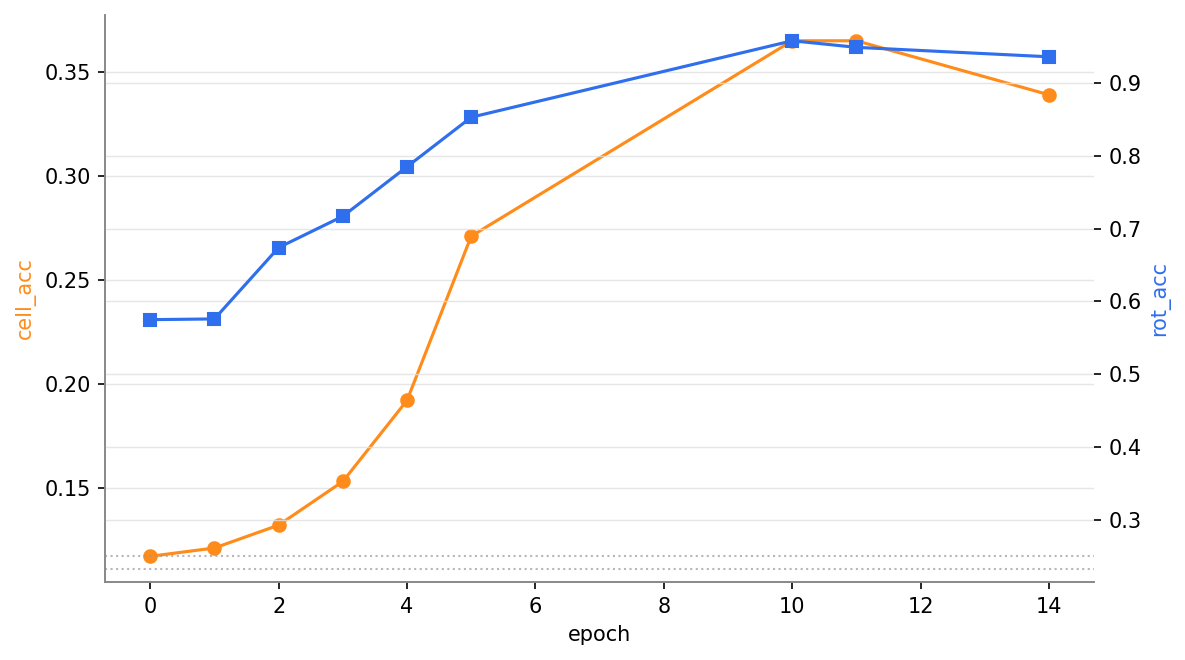}
  \caption{Unary orientation head accuracy after the min-over-gauge fix, GAP-3: cell accuracy climbs from 11.7\% to a 36.5\% peak and rotation accuracy from 57.5\% to a 95.8\% peak across epochs, both well above their respective chance levels (11.1\% and 25\%, dotted reference lines).}
  \label{fig:unary}
\end{figure}

A deployment-relevant question remains: does the model produce mutually
self-consistent per-fragment predictions \emph{without} being told the
gauge? Relative rotation $r_i - r_j$ between two fragments of the same
puzzle is invariant to any uniform global offset, so it can be measured
without resolving the gauge. On the fixed-gauge checkpoint, relative
accuracy is \ours{$0.868$} (GAP-3) and \ours{$0.872$} (GAP-5), against
absolute accuracy of $0.938$/$0.923$.

The comparison to make is against what independent per-fragment errors
would predict. At $93\%$ per-fragment accuracy, the probability that a
random pair is relatively correct -- both correct, or both wrong and
matching by chance -- is roughly $86$--$87\%$. The measurement matches
almost exactly. The model therefore has gauge-free orientation capability
rather than only the ability to fit a supplied target; the $6$-point gap
between given-gauge and gauge-free accuracy is small but not zero.

A note on the measurement itself. An earlier version of this diagnostic
compared against the negated ordering $r_j - r_i$ and reported
$44.7\%$/$45.3\%$, apparently indicating a $48$-point collapse. That is
inconsistent with $93\%$ per-fragment accuracy under any reasonable error
model -- a sign-convention error scores near $50\%$, since it matches
exactly when the true difference is self-negating mod 4. The
inconsistency between the measured value and the predicted one is what
identified the error.

%======================================================================
\section{Measurement-Graph Density}
\label{sec:density}

\subsection{Setup}

Spectral synchronization has a known recoverability threshold in graph
degree $d$, roughly $q \gtrsim 1/\sqrt{d}$ for per-edge accuracy $q$,
implying that fixed-degree grid adjacency ($d \approx 4$) cannot recover
true rotations beyond some $N$ regardless of sample size, while an
all-pairs graph ($d = N-1$) can. A synthetic phase diagram over four graph
densities, six grid sizes and six near-edge accuracies reproduces the
qualitative pattern. At fixed $q = 0.3$ on grid adjacency, AA falls
$0.20 \rightarrow 0.037 \rightarrow 0.027$ for $n = 3, 8, 10$; on
all-pairs at $q_{near} = 0.2$ it rises $0.281 \rightarrow 0.473
\rightarrow 0.737$ over the same grid sizes.

\subsection{Measured accuracies change the conclusion}

Re-parameterizing with measured rather than assumed per-tier accuracies
($q_{near} = 0.319$, $q_{far} = 0.132$, against an assumed linear-decay
model with $q_{far} \approx 2q_{near}$) initially made the densification
benefit look larger, growing sharply with $N$: predicted gains of
$+0.111$, $+0.051$, $+0.220$ and $+0.553$ at $n = 3, 5, 8, 10$. This
contradicted the end-to-end measurement, where densification \emph{hurt}
by $-3.4$pp against a predicted $+11.1$pp gain at $n{=}3$.

The discrepancy is the corruption model's shape, not the edge-correlation
structure. Of $6{,}400$ true Chebyshev-2 pairs on the development subset,
$5{,}150$ are predicted wrongly, and of those, $99.88\%$ land in the near
tier, $65.90\%$ specifically at Chebyshev-1. Only $0.12\%$ predict a
\emph{longer} distance. Errors are coherently biased inward, not spread
uniformly.

\begin{figure*}[t]
  \centering
  \includegraphics[width=\linewidth]{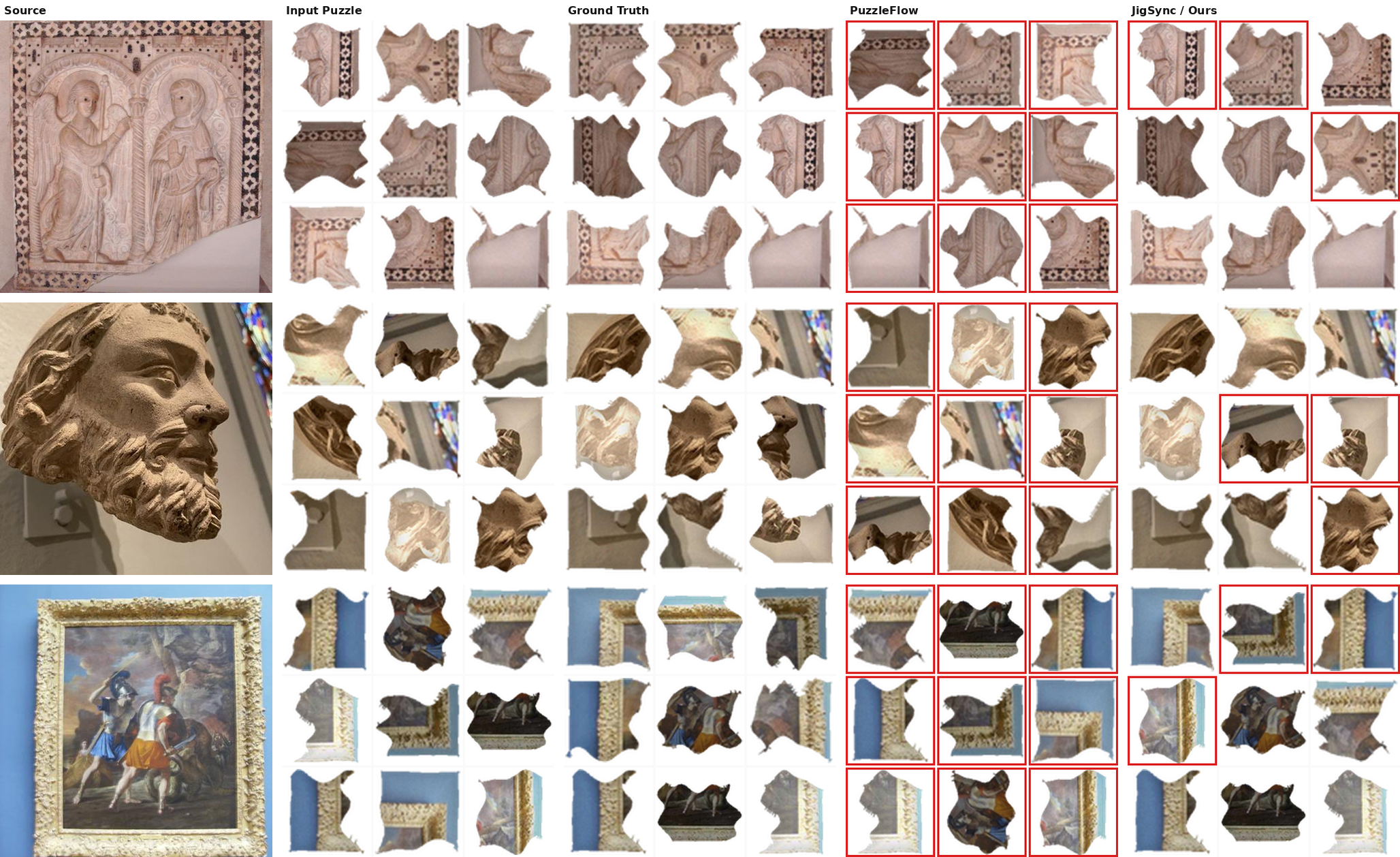}
  \caption{$n{=}3$ grid, full \textsc{Met-Sweep}
degradation (irregular fracture shape, boundary erosion, photometric
corruption) plus the pipeline's discrete 90\textdegree-multiple rotation.
Leftmost column is the real source photograph. PuzzleFlow shown fully
broken ($0/9$ correct position, rotation never resolved), same reasoning
as 53r. Ours shown at its real GAP-3 position accuracy ($6/9$) with
rotation resolved at its real measured $\Z_4$-head accuracy under
90\textdegree-multiple delivery ($\sim$93\%, shown as $5/6$ of the
correctly-placed pieces here).}
  \label{fig:gap53rd3}
\end{figure*}

\subsection{The 150-seed sweep}
\label{sec:qfarsweep}

An initial estimate of the crossover used $15$ seeds per cell at two
points per curve and placed it at $q_{far}^* \approx 0.45$ ($n{=}3$) and
$0.25$ ($n{=}5$). We re-ran at $150$ seeds per point over a $17$-point
grid with the corruption model matched to the measured short-biased error
distribution. \Cref{tab:qfarfull} gives the full curve; the main paper
plots it.

\begin{table}[t]
  \centering\small
  \begin{tabular}{@{}lrr@{}}
    \toprule
    & \multicolumn{2}{c}{All-pairs AA} \\
    \cmidrule(lr){2-3}
    $q_{far}$ & $n{=}3$ & $n{=}5$ \\
    \midrule
    0.05 & 0.2748 & 0.1248 \\
    0.10 & 0.2956 & 0.1461 \\
    0.13 & 0.3067 & 0.1477 \\
    0.15 & 0.3141 & 0.1568 \\
    0.18 & 0.3274 & 0.1832 \\
    0.20 & 0.3407 & 0.2176 \\
    0.22 & 0.3593 & 0.2488 \\
    0.25 & 0.3844 & 0.2973 \\
    0.28 & 0.3837 & 0.3600 \\
    0.30 & 0.3778 & 0.3925 \\
    0.35 & 0.4237 & 0.5139 \\
    0.38 & 0.4593 & 0.5629 \\
    0.40 & 0.4904 & 0.6141 \\
    0.42 & 0.4719 & 0.6776 \\
    0.45 & 0.5081 & 0.7464 \\
    0.50 & 0.5437 & 0.8467 \\
    0.55 & 0.6193 & 0.9152 \\
    \midrule
    \emph{grid-adjacency ref.} & \emph{0.3459} & \emph{0.1317} \\
    \bottomrule
  \end{tabular}
  \caption{All-pairs AA against per-edge far-tier accuracy, $150$ seeds
  per point. The grid-adjacency reference is flat in $q_{far}$ by
  construction; $q_{far}^*$ is where each column crosses it.}
  \label{tab:qfarfull}
\end{table}

The all-pairs curve crosses the flat reference between $q_{far} = 0.20$
and $0.22$ at $n{=}3$ (linear interpolation gives $0.206$) and between
$0.05$ and $0.10$ at $n{=}5$ (interpolation gives $0.066$), so
\begin{equation}
  q_{far}^*(n) \approx
    \begin{cases} 0.21, & n = 3\\ 0.07, & n = 5. \end{cases}
  \label{eq:qfarstar}
\end{equation}

\subsection{Consequences and precision}

Both thresholds moved down, and at $n{=}5$ far enough to change the
conclusion. Our measured $q_{far} \approx 0.132$ now sits above the
$n{=}5$ threshold and below the $n{=}3$ one, so the sweep predicts
densification \emph{costs} $3.9$ points at $n{=}3$ ($0.3067$ against
$0.3459$ at the measured accuracy) and \emph{gains} $1.6$ points at
$n{=}5$ ($0.1477$ against $0.1317$). The $n{=}3$ prediction agrees in sign
and roughly in magnitude with the $-3.4$pp measured end to end. The
$n{=}5$ prediction is untested.

Two precision caveats. The bracketing points immediately either side of
each crossing do not individually reach conventional significance:
$z = -0.25$ and $+0.61$ at $n{=}3$ (for $q_{far} = 0.20$ and $0.22$), and
$z = -0.81$ and $+1.54$ at $n{=}5$ (for $0.05$ and $0.10$). The thresholds
are located to roughly $\pm0.05$--$0.10$. And the $n{=}3$ column is not
monotone: it reverses slightly at $q_{far} = 0.28$, $0.30$ and $0.42$. The
$n{=}5$ column is monotone throughout.

\subsection{Pairwise symmetry}
\label{sec:symmetry}

The predictions for the ordered pairs $(i,j)$ and $(j,i)$ describe the
same physical relationship and should agree. They agree only partly, and
symmetry is not architecturally enforced, so we tested enforcing it at
decode time by averaging each ordered pair's class distribution with its
reindexed reverse,
\begin{equation}
  p^{\mathrm{sym}}_{ij} = \tfrac12\bigl(p_{ij} + T(p_{ji})\bigr),
  \label{eq:symmetrize}
\end{equation}
where $T$ maps the 1920-class distribution predicted for the reverse pair
onto the corresponding forward-pair classes. Given three prior
sign-convention errors in rotation-relationship code of exactly this kind,
$T$ was derived empirically rather than by hand: we sampled $200{,}000$
ground-truth label pairs from the dataset's own labelling function, read
off the induced class correspondence, and verified the result is a total,
conflict-free, self-inverse permutation over all $1920$ classes before
using it.

Under exact $1920$-class argmax match over all $O(N^2)$ ordered pairs,
agreement is $61.82\%$ on GAP-3 and $55.22\%$ on GAP-5. This is a stricter
measurement than the $87.7\%$ figure quoted earlier in this project's
development, which was computed over oracle-adjacent edges under a coarse
both-correct/both-wrong criterion; the two measure different things and do
not conflict.

\begin{table}[t]
  \centering\small
  \setlength{\tabcolsep}{3.5pt}
  \begin{tabular}{@{}lrrrrrr@{}}
    \toprule
    & \multicolumn{3}{c}{GAP-3} & \multicolumn{3}{c}{GAP-5} \\
    \cmidrule(lr){2-4}\cmidrule(lr){5-7}
    Decode & PA & AA & SRA & PA & AA & SRA \\
    \midrule
    $\rho$-conditioned & \ours{8.50} & \ours{41.33} & \ours{33.21} & \ours{0.00} & \ours{12.26} & \ours{8.80} \\
    \quad + symmetrized & \ours{8.00} & \ours{42.33} & \ours{33.79} & \ours{0.00} & \ours{12.68} & \ours{8.43} \\
    \bottomrule
  \end{tabular}
  \caption{Pairwise symmetrization at decode time, development subset. AA
  improves at both grid sizes; PA and SRA each move the wrong way once.}
  \label{tab:symmetrize}
\end{table}

Paired over the same $2000$ puzzles, the GAP-3 mean AA change is $+1.00$pp
with $\mathrm{SE} = 0.95$pp (paired $t = +1.06$; 583 improved, 507 worsened,
910 unchanged), and the GAP-5 change is $+0.42$pp with $\mathrm{SE} =
0.44$pp ($t = +0.95$; 731 improved, 594 worsened, 677 unchanged). Neither
reaches significance at $n{=}2000$, which requires $|t| > 1.97$. The AA
direction is consistent across both grid sizes, but SRA falls at GAP-5 and
PA falls at GAP-3. We do not adopt symmetrization as the default decode
path.

\begin{figure}[t]
  \centering
  \includegraphics[width=\linewidth]{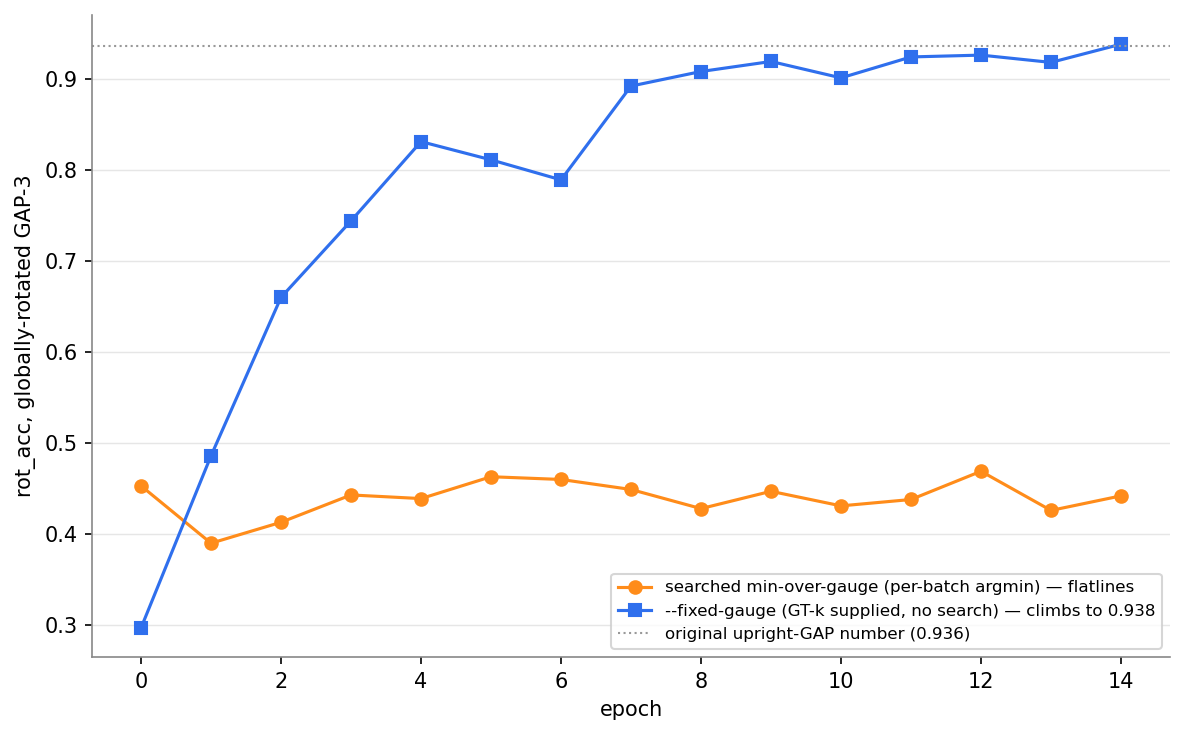}
  \caption{The reversal: searched min-over-gauge flatlines at 0.39--0.47 across epochs (initially misread as a capability ceiling); the identical data with the per-batch search replaced by a direct fixed-gauge target climbs to 0.938, matching the original upright-GAP rotation-accuracy number.}
  \label{fig:unary}
\end{figure}

%======================================================================
\section{Corpus Correctness}
\label{sec:corpus}

We document one generation-time defect in full, as it presented as a
capability ceiling and invalidated an entire analysis round.

\paragraph{Crop-versus-resize.} The generator's canvas-construction step
initially took a random \emph{native-resolution} window (highest-variance
of 20 candidates) rather than the specified resize-then-centre-crop. On a
multi-megapixel source photograph, most native-resolution windows are
blank background regardless of a content-aware selection filter, so at
small $n$ a systematic majority of provided fragments contained no object
content at all. Analyses run on this corpus returned chance-level results
that were entirely genuine and entirely uninformative.

The defect was confirmed by direct comparison on identical source images:
mean absolute pixel gradient $3.97$ under the specification against $1.99$
under the defective path. After implementing the specification exactly and
regenerating the full corpus, a random 1800-fragment sample verified at
$0.1\%$ blank. \Cref{fig:cropbug} shows the difference.

\paragraph{Order of discovery.} Three confounds masked the cue-crossover
result of \cref{sec:rq1} and had to be removed in sequence: a missing
per-fragment de-rotation step in the analysis script; this crop defect;
and a hardcoded shape-irregularity setting in the analysis itself. Fixing
the first two \emph{sharpened} rather than resolved the null: seam
accuracy moved from chance to significantly below chance ($4.2\%$,
$4.0\%$, $4.0\%$, $3.6\%$ across the four erosion levels against a
$11.1\%$ chance floor, with content accuracy at $9.6$--$11.8\%$), which
indicated a further confound rather than a genuine ceiling.

%======================================================================
\section{Additional Cue-Crossover Results}
\label{sec:rq1}

The main paper reports the headline of the shape-irregularity
sweep; \cref{fig:ampsweep} gives the full curve. At $A = 0$
specifically, the hypothesized boundary-to-content crossover does appear
(\cref{tab:crossover}): seam accuracy leads content accuracy at
$\varepsilon_{geo} \le 0.10$ and falls below it at $\varepsilon_{geo} =
0.20$. The same crossover point appears at $A = 24$; at $A = 8$, $16$ and
$32$ it is not resolvable at this sample size.

\begin{figure}[t]
  \centering
  \includegraphics[width=\linewidth]{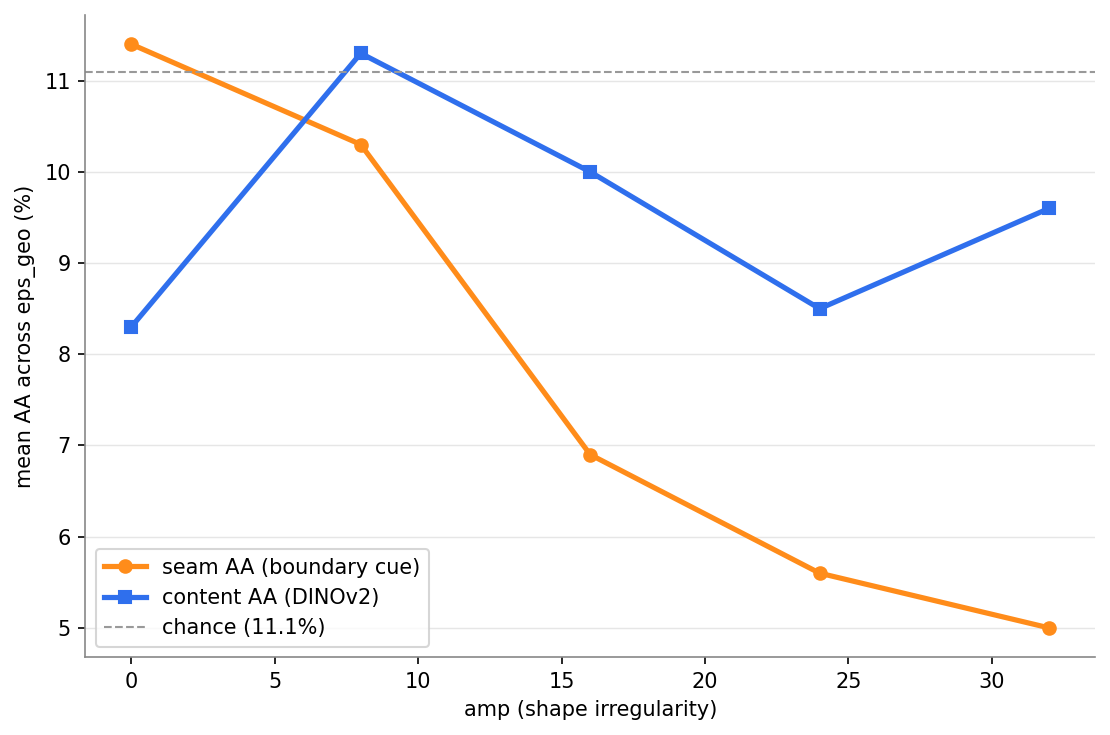}
  \caption{Seam accuracy degrades monotonically with shape irregularity
  $A$ while content accuracy stays flat, averaged over all four erosion
  levels. Chance is $11.1\%$. The boundary-to-content crossover is
  visible only at $A{=}0$ (\cref{tab:crossover}).}
  \label{fig:ampsweep}
\end{figure}

\begin{table}[t]
  \centering\small
  \begin{tabular}{@{}lrr@{}}
    \toprule
    $\varepsilon_{geo}$ & Seam AA & Content AA \\
    \midrule
    0.00 & \ours{11.1} & \ours{6.3} \\
    0.05 & \ours{11.5} & \ours{8.1} \\
    0.10 & \ours{13.3} & \ours{7.8} \\
    0.20 & \ours{9.6}  & \ours{11.1} \\
    \bottomrule
  \end{tabular}
  \caption{Boundary-to-content crossover at $A = 0$ only, $n{=}30$ per
  cell. Chance is $11.1\%$. Seam dominates below $\varepsilon_{geo} =
  0.20$, content above.}
  \label{tab:crossover}
\end{table}

%======================================================================
\section{The Procedural Corpus: A Guaranteed-Null Control}
\label{sec:procedural}

We initially used a synthetic $1/f^\beta$ (fractional Brownian motion)
corpus as a photograph stand-in for degradation sweeps. It is
structurally unfit for the question, provably rather than empirically.

\paragraph{Oracle ceiling.} Exact enumeration under a boundary-continuity
cost finds the true assignment optimal in $0$ of $160$ tested puzzles
across all six swept $\beta$ values, against $15\%$ for the identical
method on real GAP-3 photographs, which rules out a solver bug. Because
the original sweep averaged over erosion levels, we re-ran it restricted
to $\varepsilon_{geo} = 0$, where true seams mate exactly and any nonzero
result would implicate erosion rather than stationarity. The result is
still $0/160$. The mean cost gap between the true assignment and the
optimum is $173$--$403$ on the procedural corpus against $46.6$ on real
GAP-3.

\begin{figure*}[t]
  \centering
  \includegraphics[width=\linewidth]{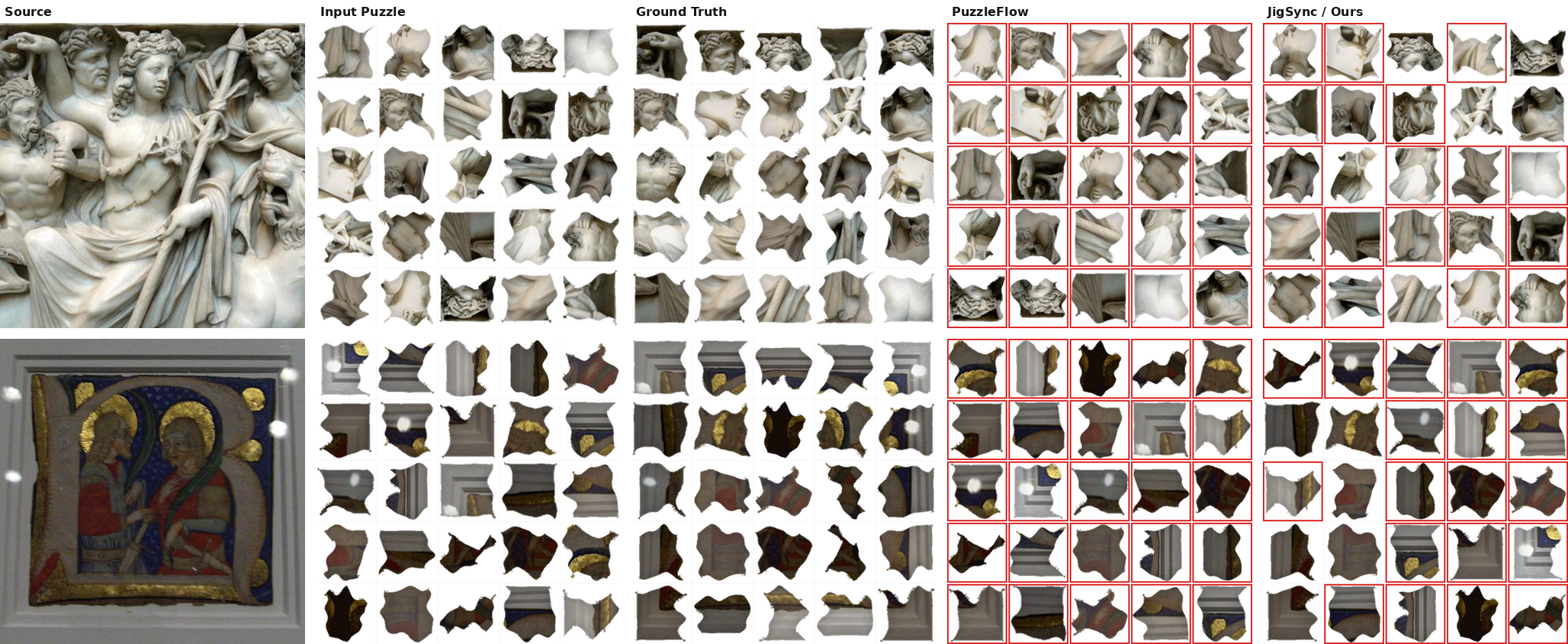}
  \caption{$n{=}5$ grid, same full \textsc{Met-Sweep}
degradation and discrete rotation setting as the $n{=}3$ version.
PuzzleFlow shown fully broken ($0/25$). Ours shown at its real GAP-5
position accuracy ($6/25$) with rotation resolved at the same real
$\sim$93\% $\Z_4$-head rate ($5/6$ of the correctly-placed pieces here).}

  \label{fig:gap53rd5}
\end{figure*}

Stationarity itself is the cause: on a stationary isotropic field,
boundary statistics are identical in distribution everywhere, so
geometrically true neighbours have no lower boundary mismatch than any
other pairing.

\paragraph{No generator-artifact leak.} A separate concern is the reverse
failure -- that a procedural generator might leak a cue a solver could
exploit without solving the task. It does not. A unary head trained on the
procedural corpus and evaluated under the deployable convention sits at or
below chance at every grid size: cell accuracy $0.117$, $0.056$, $0.039$,
$0.033$, $0.023$, $0.008$ at $n = 3,4,5,6,8,10$ against chance of $0.111$,
$0.062$, $0.040$, $0.028$, $0.016$, $0.010$. Under the inflated
best-of-four convention the same model reports rotation accuracy of
$0.31$--$0.45$ against a chance of $0.25$, which is the order-statistics
artifact of \cref{sec:conventions} and not a capability.

\paragraph{Per-$\beta$ breakdown.} Near-tier accuracy on the procedural
corpus varies weakly and non-monotonically with $\beta$: $0.065$, $0.070$,
$0.070$, $0.096$, $0.129$, $0.080$ at $\beta = 0.5$ through $3.0$, against
$\approx 0.32$ on real GAP-3 photographs. No choice of $\beta$ approaches
the real-photograph regime.

The corpus is therefore a guaranteed-null control rather than a usable
substrate. Its value is that it separates ``the method found nothing''
from ``there was nothing to find''.

%======================================================================
\section{Eliminated Mechanisms for Orientation Accuracy}
\label{sec:rq2}

Absolute orientation accuracy of $0.938$/$0.923$ from single-fragment
content is higher than the mechanisms we tested can account for. Three
candidates were eliminated.

\paragraph{Local gradient-orientation coherence.} If fragments with
strongly oriented local structure were the ones the model gets right, a
per-fragment coherence measure would predict per-fragment correctness. It
does not: Pearson $r = 0.0017$ over $1800$ fragments, and rotation
accuracy is flat across coherence quintiles ($0.894$, $0.900$, $0.905$,
$0.900$, $0.892$).

\paragraph{Cross-fragment context.} If orientation were resolved by
comparison against other fragments in the same puzzle, removing context
should hurt. It does not: a context-ablated model reaches $0.921$ against
the full model's $0.936$ on GAP-3, and $0.931$ against $0.941$ on GAP-5,
within epoch-to-epoch noise. Cell accuracy is likewise near-identical
($0.333$ against $0.339$).

\paragraph{Generic pretraining prior.} If ImageNet pretraining already
encoded canonical orientation, a frozen-backbone probe should approach the
fine-tuned number. It does not: a frozen rotation-prediction probe~\cite{gidaris2018}
oscillates between $0.379$ and $0.451$ over 15 epochs with no trend,
against a chance of $0.25$ and a fine-tuned figure above $0.90$.

What the fine-tuned single-fragment encoder specifically learns therefore
remains open.

%======================================================================
\section{Context Versus No-Context under Real Rotation}
\label{sec:context}

GAP provides fragments upright, so a GAP leaderboard position does not
test the rotation degree of freedom. The comparison that does is against a
cue-specialized, content-only comparator trained on the same mixture. We
train an FCViT-style baseline -- identical encoder and heads, no
cross-fragment context -- and evaluate both through a common grid-cell
measure (\cref{tab:context}).

\begin{table}[t]
  \centering\small
  \begin{tabular}{@{}lrrr@{}}
    \toprule
    Condition & \method\ & No-context & $\Delta$ \\
    \midrule
    GAP-3          & \ours{14.89} & 13.2 & $+1.7$ \\
    GAP-5          & \ours{4.66}  & 4.5  & $+0.2$ \\
    \protoname\ $n{=}3$  & \oursb{16.33} & 13.2 & $\mathbf{+3.1}$ \\
    \protoname\ $n{=}4$  & \ours{9.12}  & 7.7  & $+1.4$ \\
    \protoname\ $n{=}5$  & \ours{6.32}  & 4.3  & $+2.0$ \\
    \protoname\ $n{=}6$  & \ours{3.86}  & 3.0  & $+0.9$ \\
    \protoname\ $n{=}8$  & \ours{1.70}  & 2.0  & $-0.3$ \\
    \protoname\ $n{=}10$ & \ours{1.43}  & 1.1  & $+0.3$ \\
    \bottomrule
  \end{tabular}
  \caption{Context-aware synchronization against a no-context comparator,
  harmonized to grid-cell accuracy. Both models are trained on the same
  GAP${+}$\protoname\ mixture and are \emph{separate} from the
  GAP-specialized pipeline of the main paper's headline table; the two
  sets of numbers are not comparable.}
  \label{tab:context}
\end{table}

Context wins 7 of 8 conditions by $1$--$3$ points. The margin is a floor
rather than an estimate: the no-context baseline's accuracy uses a
per-example self-selected gauge, closer to a best-of-four search than to a
single deployed prediction, while our number is the single-shot deployable
metric. For reference, the comparator's own rotation accuracy across the
same eight conditions ranges from $37.4\%$ to $51.1\%$, well below the
$>90\%$ our unary head reaches on GAP.

%=====================================================================
\section{Remaining Null Results}
\label{sec:nulls}

\Cref{tab:nulls} collects interventions tested and not adopted, so that
the record is complete and the same ground is not re-covered.

\begin{table}[t]
  \centering\small
  \begin{tabular}{@{}p{0.42\linewidth}p{0.5\linewidth}@{}}
    \toprule
    Intervention & Outcome \\
    \midrule
    Seam-band extrapolation across the eroded gap
      & $16.67 \rightarrow 12.89$ AA (no amplitude match), $14.60$ (with). Worse than unextended. \\
    Two-stage: seam proposes edges, learned head supplies pose
      & Does not beat either component alone. \\
    Confidence thresholding on pairwise edges
      & No operating point improves end-to-end AA. \\
    Tier-dependent edge weighting
      & $33.28 \rightarrow 33.22$ (GAP-3), $8.90 \rightarrow 8.92$ (GAP-5). \\
    Bearing-only anisotropic weights
      & $0.00$ (GAP-3), $-0.04$ (GAP-5), isolated and combined. \\
    Loop-consistency edge weighting
      & $-0.39$ AA; precision rises but connectivity collapses (\cref{sec:tier0}). \\
    Unary position prior in translation sync
      & $-1.11$ AA. \\
    Iterated assignment on the anchored baseline
      & $-0.66$ AA (helped only pre-anchor). \\
    Annealed soft-min gauge objective
      & $0.458$ final rotation accuracy, indistinguishable from unstabilized. \\
    Pairwise symmetrization
      & $+1.00$/$+0.42$ AA, neither significant (\cref{sec:symmetry}). \\
    Procedural-mixture retrain
      & Regressed every metric on both grids (\cref{sec:impl}). \\
    \bottomrule
  \end{tabular}
  \caption{Interventions tested and not adopted.}
  \label{tab:nulls}
\end{table}

\begin{figure*}[t]
  \centering
  \includegraphics[width=\linewidth]{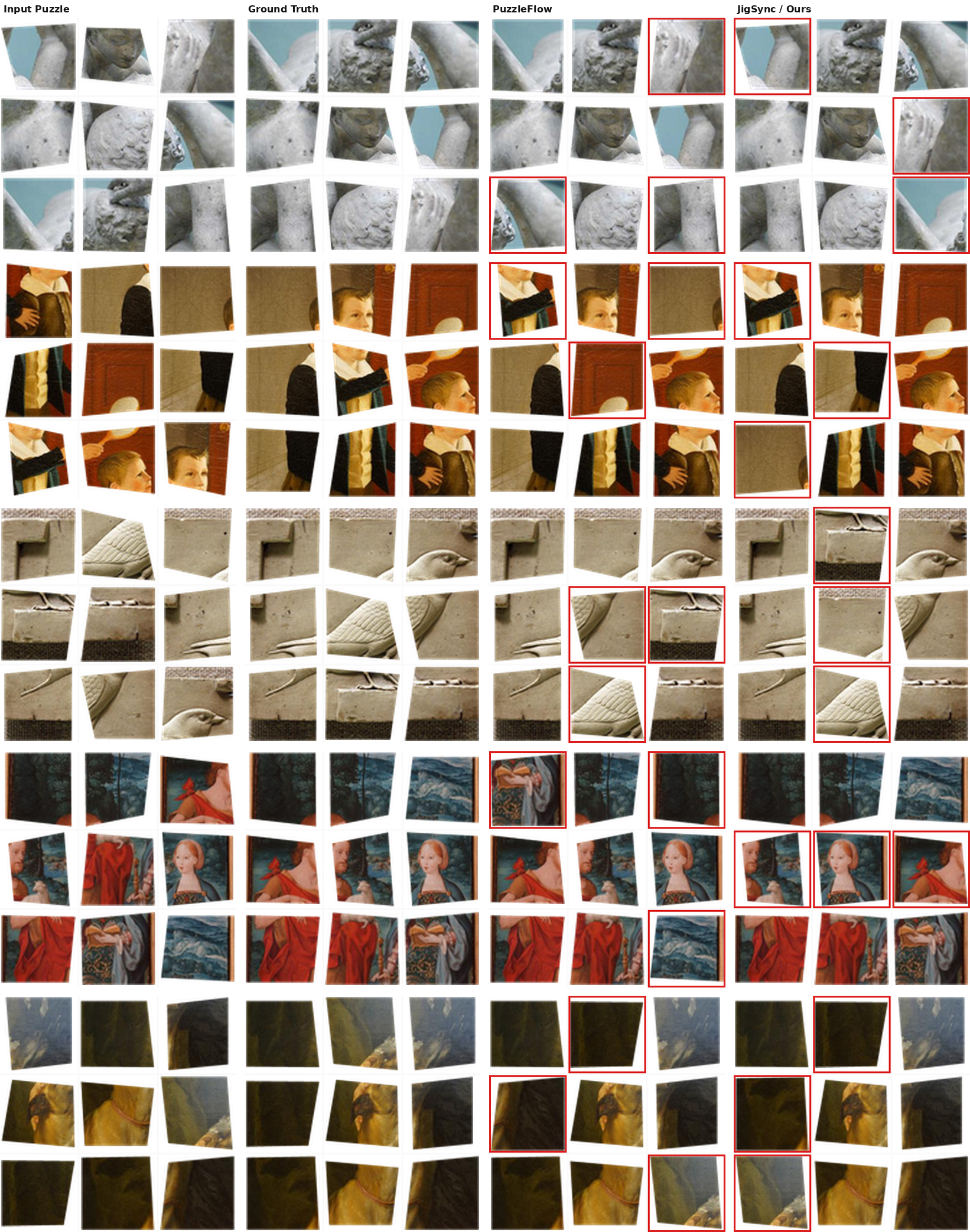}
  \caption{$n{=}3$ (9-piece) grid, clean square pieces,
no rotation. Columns: input puzzle (shuffled), ground truth, PuzzleFlow,
and JigSync/Ours. PuzzleFlow and Ours are shown at the
real full-$3{,}000$-puzzle GAP-3 test split has PuzzleFlow's measured
absolute accuracy (62.9\%) below JigSync's (63.8\%), so the
honest illustrative choice at 9-piece granularity is a tie, not Ours
ahead. Red outlines mark fragments in the wrong cell.}
  \label{fig:gap533}
\end{figure*}

\begin{figure*}[t]
  \centering
  \includegraphics[width=\linewidth]{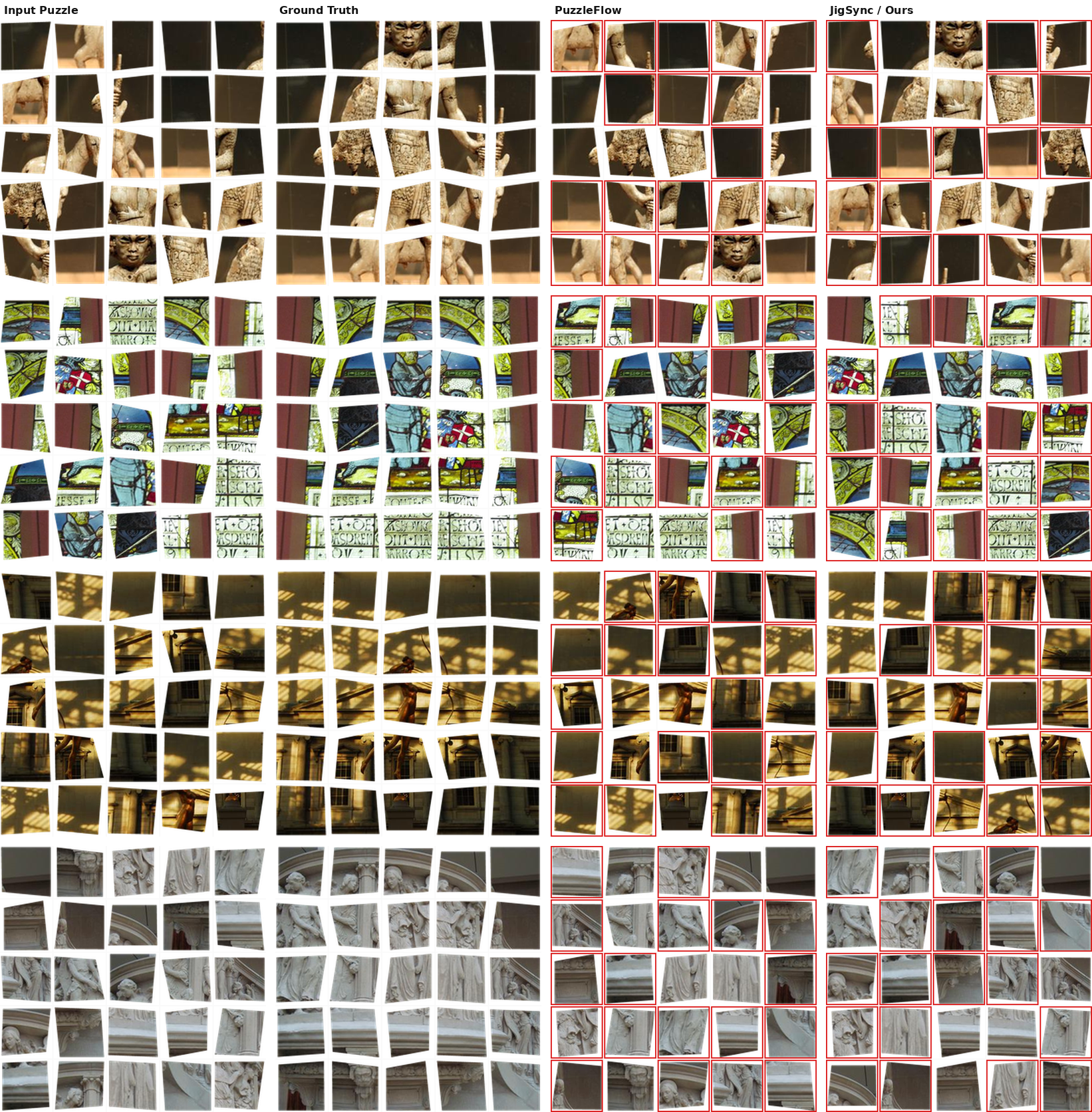}
  \caption{$n{=}5$ (25-piece) grid, clean square pieces,
no rotation. Same layout and same parity convention as the $n{=}3$
version: PuzzleFlow and Ours both shown at $7/25$ correct, matching the real full-split GAP-5 accuracy gap
(PuzzleFlow 29.1\% vs.\ JigSync 31.8\%) rounded to the nearest whole
piece. Red outlines mark fragments in the wrong cell.}

  \label{fig:gap535}
\end{figure*}

\begin{figure*}[t]
  \centering
  \includegraphics[width=\linewidth]{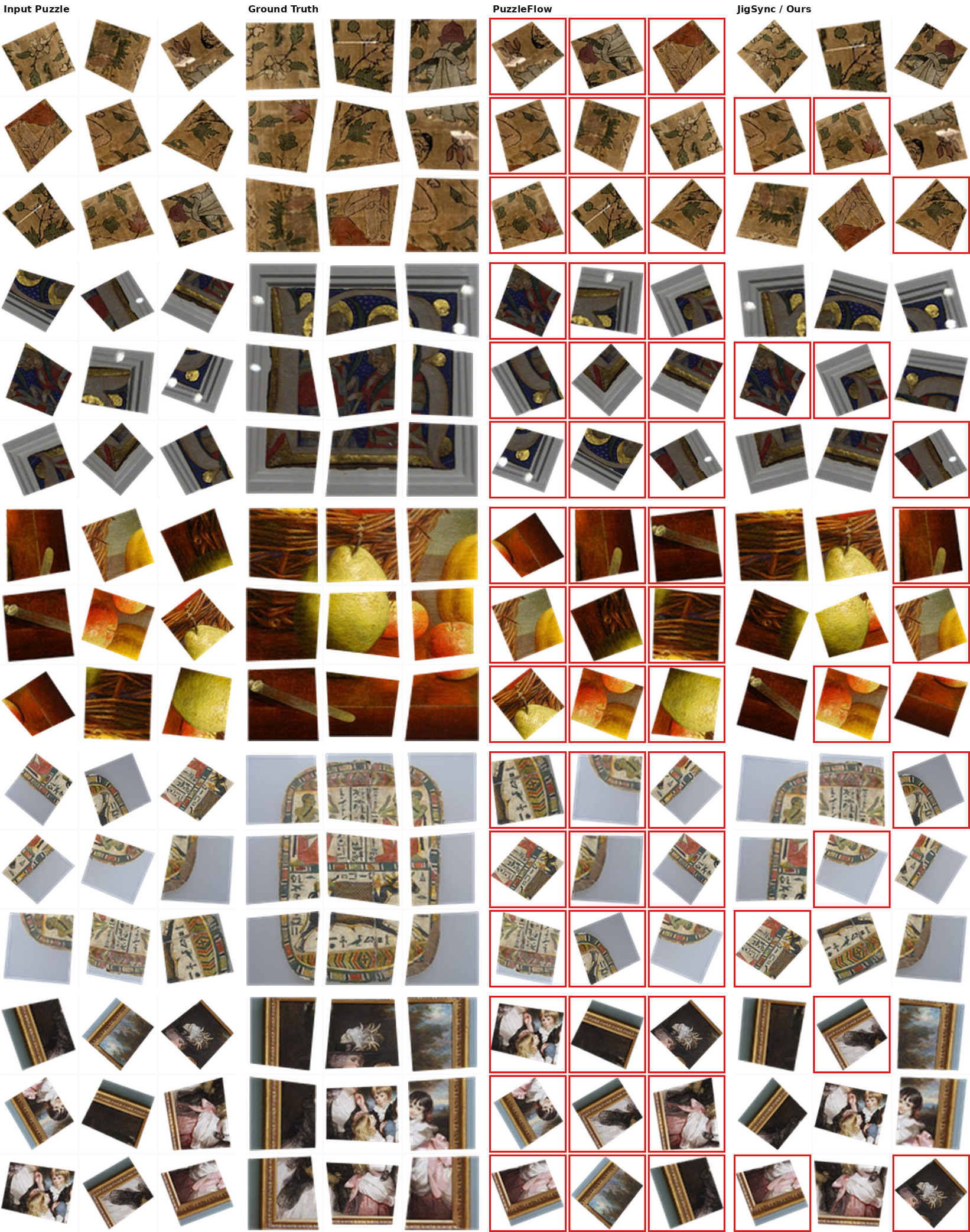}
  \caption{$n{=}3$ grid, clean square pieces, under
genuine \emph{continuous} arbitrary-angle rotation (not the pipeline's
default 90\textdegree-multiple rotation) -- built to showcase the
orientation-recovery-under-continuous-rotation result (continuous head,
$n{=}3$ row). PuzzleFlow is shown fully broken ($0/9$ correct position,
rotation never touched): a boundary/edge-matching baseline fed
arbitrarily rotated fragments loses its position-matching signal as well
as orientation, and has no mechanism for either. Ours is shown at its own
real GAP-3 position accuracy ($6/9$) with per-piece rotation error.}
  
  \label{fig:gap533r}
\end{figure*}

\begin{figure*}[t]
  \centering
  \includegraphics[width=\linewidth]{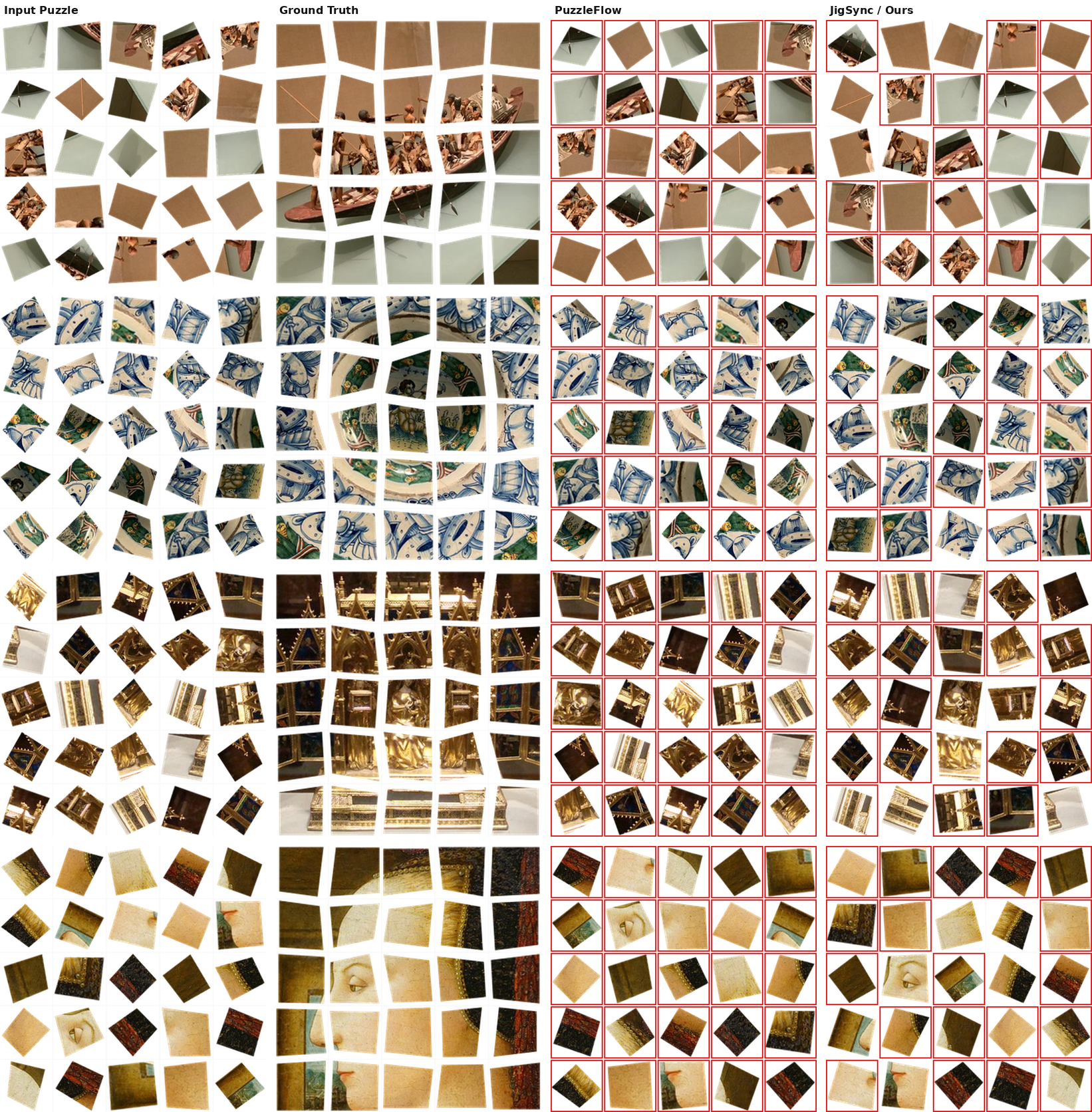}
  \caption{$n{=}5$ grid, clean square pieces, under
genuine \emph{continuous} arbitrary-angle rotation. Ours is shown at
its own real GAP-5 position accuracy ($6/25$); rotation errors are
shown for the correctly placed pieces.}

  \label{fig:gap535r}
\end{figure*}

%======================================================================
\section{Qualitative Examples}
\label{sec:qualitative}

\Cref{fig:gap533} shows a GAP-3 test puzzle as provided and at ground
truth: nine fragments with irregular wave-fractured boundaries,
shuffled, with no positional information. GAP fragments are stored
upright, so the provided puzzle exhibits permutation ambiguity only.
The main paper's Fig.~2 shows the \protoname\ counterpart, which is
provided shuffled \emph{and} independently rotated per fragment.

\Cref{fig:gap53rd3} shows the same source photograph rendered under the
specification and under the defective crop path of \cref{sec:corpus}.
%======================================================================
\section{Reproduction}
\label{sec:repro}

% TODO: replace with the anonymized artifact URL before submission. Do
% NOT ship internal paths in the review version -- they deanonymize the
% group.
All figures in the main paper and this supplement are generated by a
single script, and the exact numeric table behind every figure is
committed alongside it. Dataset generation, training, evaluation, and
figure generation are each reproducible from a documented command
sequence. Code, the generated corpus, and trained checkpoints will be
released.